\begin{document}

\markboth{Jia Cai et al.}{SStaGCN: Simplified stacking based graph convolutional networks.}

%
\catchline{}{}{}{}{}
%

\title{SStaGCN: Simplified stacking based graph convolutional networks}

\author{Jia Cai}

\address{School of Digital Economy,  Guangdong University  of Finance $\&$ Economics\\
	Guangzhou, 510320,
	P. R. China.\\
	jiacai1999@gdufe.edu.cn}

\author{Zhilong Xiong}
\address{School of Statistics and Mathematics,  Guangdong University  of Finance $\&$ Economics\\
	Guangzhou, 510320,
	P. R. China.\\
	zhilongxiong98@outlook.com}

\author{Shaogao Lv\footnote{Corresponding author.}}
\address{Department of Statistics and Data Science, Nanjing Audit University\\
	 Nanjing,  211815, 
	 P. R. China.\\
	 lvsg716@nau.edu.cn}
	
\maketitle

\begin{history}
\received{(Day Month Year)}
\revised{(Day Month Year)}
\end{history}

\begin{abstract}
Graph Convolutional Networks (GCNs) are powerful models extensively studied for various graph-structured data learning tasks. However, designing GCNs that effectively mitigate the over-smoothing phenomenon remains a crucial challenge. In this paper, we propose a novel Simplified Stacking-based GCN (SStaGCN) framework, leveraging stacking and aggregation techniques to address different types of graph-structured data. Specifically, we first employ stacking base models to extract node features from the graph. Next, we apply aggregation methods—such as mean, attention, and voting techniques—to further enhance feature extraction capabilities. These refined node features are then fed into a vanilla GCN model. Additionally, we provide a theoretical analysis of the generalization bound for the proposed model. Extensive experiments on three public citation networks and three heterogeneous tabular datasets demonstrate the effectiveness and efficiency of the SStaGCN approach over several state-of-the-art GCNs. Notably, SStaGCN effectively mitigates the over-smoothing issue common in GCNs.
\end{abstract}

\keywords{Graph convolutional network; stacking; aggregation.}

\ccode{Mathematics Subject Classification 2010: 68T05, 68W40}

\section{Introduction}\label{sec:intr}
Recent research on learning from graph-structured data has gained considerable attention across diverse fields within artificial intelligence. Graph Neural Networks (GNNs), particularly Graph Convolutional Networks (GCNs) \cite{Bruna2014SpectralNA, Gori2005ANM, Hamilton2017InductiveRL}, have achieved remarkable success in modeling graph-structured data and have been widely applied to recommendation systems \cite{Sun2020AFF}, computer vision \cite{Casas2020SpAGNNSG}, molecular design \cite{2020stokes}, natural language processing \cite{Yao2019GraphCN}, node classification \cite{Kipf2017SemiSupervisedCW}, and clustering tasks \cite{Zhu2012MaxMarginNL}.

Despite these successes, real-world applications present diverse types of graph-structured data, raising the question: can we design a general framework to handle distinct types of graph-structured data in a simpler, more effective manner? Additionally, as discussed in \cite{li2018Deeper}, the graph convolution operation in GCNs is a specific form of Laplacian smoothing, which mixes node features across different clusters and can lead to over-smoothing \cite{li2018Deeper}.

Sun et al. \cite{Sun2019AdaGCNAG} addressed this issue by developing an RNN-like GCN with AdaBoost, enabling the extraction of knowledge from high-order neighbors of current nodes. However, the relation $A^\ell = B^\ell$ does not necessarily imply $A = B$. A simple example illustrating this is the case where $\ell = 2$ (two layers), with 
$$A=
\left(\begin{matrix}
0& 1  \\
1 &0
\end{matrix}
\right),
\qquad
B=
\left(\begin{matrix}
1& 0  \\
0 &1
\end{matrix}
\right).
$$
As a result, this RNN-like GCN inevitably loses crucial information about the original graph structure, leaving the challenge of designing a simple GCN model to tackle the over-smoothing issue unresolved.

The stacking method \cite{Deroski2004IsCC}, also known as stacked generalization, is an ensemble approach that combines multiple classifiers using base models and a meta-model. Stacking has shown success in feature extraction tasks across various data types, including disordered or irregular data, due to its advantageous properties:

By combining distinct learners in the base models, stacking effectively captures discernible features. Base models are trained on the entire training dataset, allowing for robust performance evaluation on test data. A meta-model then aggregates these outputs to make final predictions on the test data. Combining stacking with GCN models can yield significant benefits. In this paper, we propose a novel simplified stacking-based architecture, Simplified Stacking-based GCN (SStaGCN), to address the over-smoothing issue in GCNs. SStaGCN integrates the feature extraction strengths of stacking with GCNs' graph learning capabilities. This synergy enables SStaGCN to harness the strengths of both stacking and GCNs.  

The main contributions of this paper are as follows:

\begin{itemlist}

\item  We propose a novel, versatile architecture that combines simplified stacking and GCNs, designed to adapt flexibly to diverse types of graph-structured data, including heterogeneous tabular data\footnote{Heterogeneous tabular data includes a mix of discrete and continuous variables.}.

\item We present a generalization bound analysis to elucidate the contributions of stacking and aggregation from a learning theory perspective.

\item We conduct extensive evaluations of our approach against strong baselines on node prediction tasks. The experimental results demonstrate significant performance improvements in both homogeneous and heterogeneous node classification tasks across various real-world graph-structured datasets, effectively alleviating the over-smoothing phenomenon.

\end{itemlist}
The remainder of the paper is organized as follows. In Section \ref{relaw}, we provide a brief review of related work. Section \ref{main} presents the theoretical analysis and proposed algorithm for GCNs. In Section \ref{experi}, we detail experiments conducted on three public citation networks and three heterogeneous tabular datasets. Section \ref{concl} offers discussions and concluding remarks. The proof of the main result is included in the Appendix.

\section{Related Work}\label{relaw}

{\bf Graph Convolutional Networks}

GCNs are commonly understood as extensions of traditional convolutional neural networks applied to graph domains. Generally, there are two types of GCNs \cite{Michael2017Geometric}: spatial GCNs and spectral GCNs. Spatial GCNs construct new feature vectors for each node by leveraging neighborhood information, where convolution acts as a "patch operator." Spectral GCNs, on the other hand, define convolution by decomposing a graph signal in the spectral domain and applying a spectral filter (such as Fourier or wavelet filters, \cite{Bruna2014SpectralNA, Xu2019GraphWN, lim2020Fast}) to the spectral components \cite{2013Discrete, 2013Theshuman}. However, spectral GCNs require computation of the Laplacian eigenvectors, which becomes impractical for large-scale graphs due to its high computational cost.

To address this, Hammond et al. \cite{2011Wavelets} used Chebyshev polynomials up to the $K$-th order to approximate the spectral filter. Defferrard and Vandergheynst \cite{Defferrard2016ConvolutionalNN} proposed the $K$-localized ChebyNet, and Kipf and Welling \cite{Kipf2017SemiSupervisedCW} introduced a simpler model by setting $K=1$, which proved effective for semi-supervised classification tasks. Wu et al. \cite{2019wuSimplifying} further simplified GCNs by removing nonlinearities and collapsing the weight matrices between layers. In contrast, other works like \cite{li2019DeepGCNs, li2018Deeper} explored the development of deeper GCNs, while multi-scale deep GCNs were examined in \cite{luan2019Break}. Despite these advancements, over-smoothing remains a significant challenge for GCNs.

Various strategies have been proposed to address over-smoothing. Klicpera et al. \cite{klicpera_predict_2019} and Chien et al. \cite{chien2021adaptive} used PageRank and generalized PageRank, respectively, to update graph information. DropEdge \cite{rong2020dropedge} randomly removes edges in the graph to mitigate over-smoothing. Similarly, GRAND \cite{feng2020graph} introduced a random propagation strategy to augment graph data and applied consistency regularization. Chen et al. \cite{chen2020simple} developed GCNII, a GCN variant that uses initial residuals and identity mapping to address over-smoothing, while DGMLP \cite{zhang2021evaluating} incorporated adaptive modes and residual connections.

This paper also adopts a decoupled approach, performing feature extraction followed by message propagation to classify nodes. However, our approach achieves superior performance and accuracy while providing a more flexible and general framework.

{\bf Ensemble learning based graph neural networks}

Sun et al. \cite{Sun2019AdaGCNAG} designed an RNN-like graph structure to extract information from high-order neighbors of each node, while Ivanov and Prokhorenkova \cite{Ivanov2021BoostTC} integrated gradient-boosted decision trees (GBDT) into GNNs, developing BGNN to handle heterogeneous tabular data. This raises a natural question: can we design a general GCN architecture that not only accommodates diverse graph data but also addresses the over-smoothing issue? This paper seeks to investigate this challenge and provides a solution to the above question.
\section{The proposed approach: SStaGCN}\label{main}

\subsection{Graph convolutional networks}
Given an undirected graph $\mathcal{G} = (\mathcal{V}, \mathcal{E})$ with nodes $v_i \in \mathcal{V}$ and edges $(v_i, v_j) \in \mathcal{E}$, let $A \in \mathbb{R}^{N \times N}$ denote the adjacency matrix, where $N$ is the total number of nodes. The corresponding degree matrix is denoted as $\mathbf{D}$, with $\mathbf{D}_{ii} = \sum_{j} A_{ij}$. For an undirected graph, it is evident that $A_{ij} = A_{ji}$.

In conventional GCN models for semi-supervised node classification, the node embeddings with two convolutional layers are computed as follows:
\begin{equation}\label{GCNexpre}
Z=\hat{A} {\rm ReLU}(\hat{A}XW^{(0)} )W^{(1)}.
\end{equation}
Here, $Z \in \mathbb{R}^{N \times K}$ represents the final embedding matrix (output logits) of the nodes prior to the softmax operation, where $K$ is the number of classes. The feature matrix $X \in \mathbb{R}^{N \times d}$ contains node features, with $d$ as the input dimension. We define $\hat{A} = \tilde{D}^{-\frac{1}{2}} \tilde{A} \tilde{D}^{-\frac{1}{2}}$, where $\tilde{A} = A + I$ (with $I$ as the identity matrix) and $\tilde{D}$ is the degree matrix of $\tilde{A}$.

Furthermore, $W^{(0)} \in \mathbb{R}^{d \times H}$ is the weight matrix for the input-to-hidden layer with $H$ hidden features, and $W^{(1)} \in \mathbb{R}^{H \times K}$ is the weight matrix for the hidden-to-output layer. Notably, the standard GCN model in Eq. (\ref{GCNexpre}) applies $\hat{A}$ to $X$ repeatedly, which leads to all node features becoming indistinguishable due to excessive smoothing.

\subsection{Stacking}
Stacking is a well-known and widely used ensemble machine learning algorithm that integrates models in a hierarchical framework \cite{Wolpert1992StackedG}. It employs a meta-learning algorithm to optimally combine predictions from two or more base machine learning models. A traditional stacking model consists of multiple base models and a meta-model that integrates the base models' predictions. Each base model is trained on the dataset and produces individual predictions. The meta-model then learns to best combine these predictions, typically using a straightforward approach to provide a cohesive interpretation of the base models' outputs. As a result, linear models, such as linear regression for regression tasks and logistic regression for classification tasks, are often chosen as meta-models.

\subsection{The proposed approach}
As noted above, GCNs may blend node features from different clusters, which can lead to suboptimal predictions. Therefore, it is essential to aggregate node information effectively to improve predictive accuracy. Inspired by the traditional stacking approach and the work in \cite{Ivanov2021BoostTC, Sun2019AdaGCNAG}, we aim to reduce computational cost by using only the base models from the stacking method and then aggregating their outputs to derive the nodes' attributes. Specifically, our proposed method operates as follows: in the first layer, we obtain  $X'$ by passing  $X\in \mathbb{R}^{N\times d}$  (the feature matrix) through 
$k$ base classifiers.
\begin{equation}
X'_i=h_i (X) , \qquad  i =1,\cdots,k,
\end{equation}
Next, we obtain the preliminary classification results $X'_i$ by passing $X$ through each base classifier  $h_i(X)$ for $i=1,\cdots,k$. Then, we apply an aggregation method to produce the final output results, i.e.,
\begin{equation}
{\tilde X} = g(X'_1,\cdots,X'_k),
\end{equation}
where $g(\cdot)$ denotes the aggregation method, which is designed to combine attribute values into a single representative value. Common aggregation methods include mean, attention, or voting approaches.
\begin{figure*}[!htb]
	\centering
	\includegraphics[scale = 0.38]{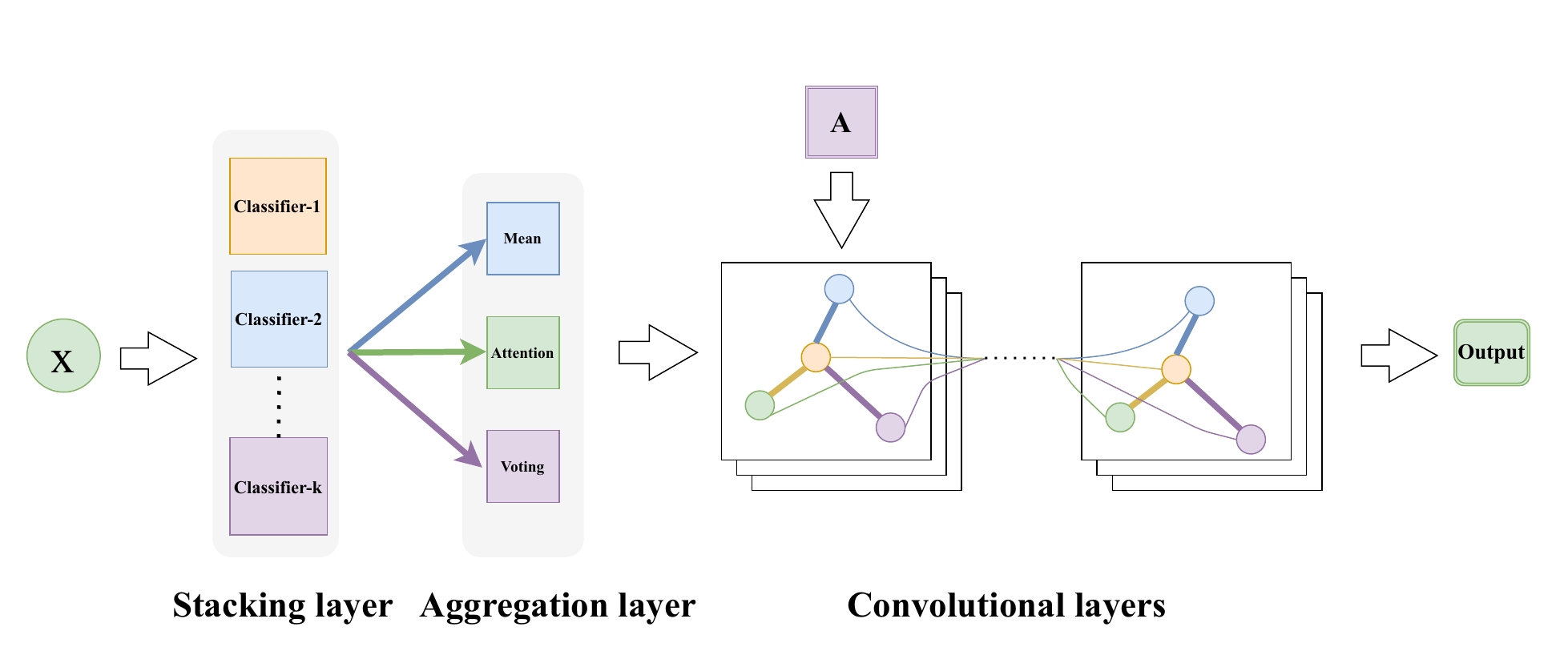}
	\caption{Workflow of the  SStaGCN model.}
	\label{pipSTGCN}
\end{figure*}

\begin{algorithm}[!h]
	\renewcommand{\algorithmicrequire}{\textbf{Input:}}
	\renewcommand\algorithmicensure {\textbf{Output:} }
	\caption{SStaGCN.}
	\label{SStaGCN}
	\begin{algorithmic}[1]
		\REQUIRE ~~\\
		Feature matrix $X$, normalized adjacency matrix $\hat{A}$, graph $\mathcal{G}$, base classifier $h_i(X)$, $i=1,\cdots,k$, aggregation method $g(\cdot)$;\\
		\ENSURE ~~\\
		Final predictor $f(X)$;\\
		\STATE Attain $X'_i$ ($i=1,\cdots,k$) via $k$  base classifiers;\\
		$X'_i \gets h_i(X)$, $i=1,\cdots,k$;
		\STATE Aggregate $X'_1,\cdots,X'_k$;\\
		${\tilde X} \gets g(X'_1,\cdots,X'_k)$;
		\STATE Feed ${\tilde X}$ into vanilla GCN \cite{Kipf2017SemiSupervisedCW};\\
		$f(X)\gets GCN({\tilde X} )$;
		\RETURN $f(X)$;
	\end{algorithmic}
\end{algorithm}

\textbf{Mean}:  mean operator takes the element-wise mean of the components $X'_1,\cdots, X'_k$.

\textbf{Attention} \cite{Vaswani2017AttentionIA}: The attention mechanism has been widely applied in various fields of deep learning, including image processing \cite{Mnih2014RecurrentMO}, speech recognition \cite{Bahdanau2015NeuralMT}, and natural language processing \cite{Yin2016ABCNNAC}. This concept is inspired by the way humans focus attention selectively. Let the query vector $V$ represent the output of the base classifiers, and let the query vector $Y$ denote the data label. We then compute the attention coefficients between $Y$ and $V$ as follows:
\begin{equation}
a_{i} ={{\rm softmax}}({\rm cos}(Y,V_{i} )),\qquad i=1,\cdots,k,
\end{equation}
where $\cos$ denotes cosine similarity. Finally, the input ${\tilde X}$ of the graph convolutional layer is aggregated by considering the following sum with attention scores $a_{i}$, $i=1,\cdots, k$.
\begin{equation}
{\tilde X} =\sum^k_{i=1} a_{i}V_{i}.
\end{equation}

\textbf{Voting} \cite{Schapire1997BoostingTM}: The voting method is the most straightforward approach in ensemble learning, aiming to select one or more ``winning" predictions. In this work, our goal is to select the most common prediction among the outputs of the base classifiers, using a majority voting approach. In cases where categories appear with equal frequency, a category will be chosen at random.

This brings us to a novel GCN model, SStaGCN, specifically designed to handle diverse types of graph-structured data by effectively integrating stacking, aggregation, and the vanilla GCN model \cite{Kipf2017SemiSupervisedCW}. In SStaGCN, the first layer leverages base models from the stacking approach, while the second layer applies an aggregation method—such as mean, attention, or voting—to enhance the feature extraction capabilities of standard GCN models. The aggregated data is then used as input to the convolutional layer, which produces the final predictions. The workflow of the proposed model is presented in Algorithm \ref{SStaGCN} and Fig. \ref{pipSTGCN}.  This prompts the question: is there a theoretical guarantee for the proposed approach?

\subsection{Generalization bound}
In this section, we provide a theoretical generalization analysis of the proposed approach. In the following analysis, we assume that both the adjacency matrix 
$A$ and the feature matrix $X$ are fixed.

In learning theory, the risk of a function $f$ over the unknown population distribution $\mathbb{P}$ is measured by
$$
\mathcal{E}(f):=\mathbb{E}_{\mathcal{X}\times \mathcal{Y}}[L(y,f(\tilde{\bf x}))],
$$
where $L$ is the loss function defined as a map: $L: \mathcal{X}\times \mathcal{Y}\rightarrow \mathbb{R}^+$,  Given a  training data and  adjacency matrix $A$, the objective is to estimate parameters $(W^{(0)},W^{(1)})$ from model \eqref{GCNexpre} based on empirical data. Concretely, we attempt to  minimize the empirical risk functional over some function class  $\mathcal{F}$,
$$
\mathcal{E}_m(f):=\frac{1}{m}\sum_{j=1}^mL(y_j,f(\tilde{\bf x}_j)),
$$
where  $\{(\tilde{{\bf x}}_j,y_j)^m_{j=1}\}$ is  the labeled sample achieved from the original training data  $\{({\bf x}_j,y_j)^m_{j=1}\}$ via stacking and aggregation.  Typically, the clustering algorithms will only produce discernible node features. Hence,  if $\|{\bf x}_i\|\leq R, i=1,\cdots, N$, the stacking and aggregation mechanisms will not make $\tilde {\bf x}_i$, $i=1,\cdots, N$  violate the constraints, i.e., $\|\tilde {\bf x}_i\|\leq R, i=1,\cdots, N$.
Now we are in position to present the theoretical generalization bound.
\begin{theorem}\label{mainthm}
	Suppose $\|{\bf x}_i\|_2\leq R, i=1,\cdots, N$,  $\|W^{(0)}\|_F\leq B_1$, $\|W^{(1)}\|_F\leq B_2$. Denote $N(v)$ the number of neighbors of node $v\in \Omega$ (the set of node indices with observed labels), let $q=\max\{N(v)\}$, $f: \mathcal{X}\rightarrow \mathbb{R}$ be any given predictor of a class for GCNs with one-hidden layer. Assume that the  loss function $L	 (y,\cdot)$ is Lipschitz continuous with Lipschitz constant $\alpha_L$. Then, for any $\delta>0$, with probability at least $1-\delta$,  we have
	$$
	\mathcal{E}(f)\leq \mathcal{E}_N(f)+\frac{2\alpha_L\sqrt{2q}KB_1B_2R\sum^q_{s=1}M_s}{\sqrt{m}} +\sqrt{\frac{2\log(2/\delta)}{N}},
	$$
\end{theorem}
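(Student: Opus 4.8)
The plan is to pass from the population risk to the empirical risk by a uniform-convergence (Rademacher complexity) argument, and then to control the Rademacher complexity of the two-layer GCN class by peeling the network apart layer by layer. First I would invoke the classical symmetrization bound: with probability at least $1-\delta$ over the draw of the labeled nodes,
$$
\mathcal{E}(f)\leq \mathcal{E}_N(f)+2\mathcal{R}_N(\ell\circ\mathcal{F})+\sqrt{\frac{2\log(2/\delta)}{N}},
$$
where $\mathcal{R}_N$ denotes the empirical Rademacher complexity of the composed loss class. The trailing confidence term $\sqrt{2\log(2/\delta)/N}$ is the output of McDiarmid's bounded-differences inequality applied to $\sup_{f\in\mathcal{F}}(\mathcal{E}(f)-\mathcal{E}_N(f))$, using that relabeling a single node perturbs the empirical average by $O(1/N)$.

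Next I would strip the loss off the hypothesis class. Since $\ell(y,\cdot)$ is $\alpha_\ell$-Lipschitz, the Ledoux--Talagrand contraction lemma gives $\mathcal{R}_N(\ell\circ\mathcal{F})\leq \alpha_\ell\,\mathcal{R}_N(\mathcal{F})$, so the task reduces to bounding $\mathcal{R}_N(\mathcal{F})$ for the GCN class $f(\tilde{X})=\hat{A}\,\mathrm{ReLU}(\hat{A}\tilde{X}W^{(0)})W^{(1)}$ under the constraints $\normk{W^{(0)}}\leq B_1$, $\normk{W^{(1)}}\leq B_2$ and $\|\tilde{\bf x}_i\|_2\leq R$.

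The heart of the argument is the layer-by-layer decomposition of $\mathcal{R}_N(\mathcal{F})$. I would first absorb the output weight $W^{(1)}$ via Cauchy--Schwarz together with $\normk{W^{(1)}}\leq B_2$, summing the contributions of the $K$ output coordinates (this is where the factor $K$ enters); then discharge the ReLU by a second application of contraction (ReLU is $1$-Lipschitz); and finally absorb the input weight with $\normk{W^{(0)}}\leq B_1$ and the bound $R$ on the node features. The genuinely nonstandard step is handling the two graph-convolution operators $\hat{A}$: because $\hat{A}$ aggregates each node's representation over its neighbors, the per-node representations are coupled rather than independent, and one must track how the neighborhood sizes and the magnitudes of the normalized adjacency entries propagate through the Rademacher average. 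This is exactly where the factor $\sqrt{2q}$ arising from the largest neighborhood $q=\max_i N(i)$, together with the accumulated per-neighbor weights $\sum_{s=1}^q M_s$, is generated. I expect this aggregation step --- separating the randomness of the Rademacher variables from the deterministic but coupling action of $\hat{A}$ --- to be the main obstacle.

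Finally I would collect constants. Combining the two contraction steps and the symmetrization factor, the weight-norm bounds $B_1B_2$, the input radius $R$, the class count $K$ and the graph factors $\sqrt{2q}\sum_{s=1}^q M_s$, and normalizing the Rademacher average by the sample size $m$, yields
$$
2\mathcal{R}_N(\ell\circ\mathcal{F})\leq \frac{2\alpha_\ell\sqrt{2q}\,KB_1B_2R\sum_{s=1}^q M_s}{\sqrt{m}}.
$$
Substituting this into the symmetrization bound reproduces the claimed inequality.
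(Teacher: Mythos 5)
Your high-level architecture is exactly the paper's: symmetrization plus a McDiarmid tail term (the paper's Lemma 2), Lipschitz contraction to strip the loss, and then a layer-by-layer bound on the Rademacher complexity of the GCN class. However, the step you yourself flag as ``the main obstacle'' --- separating the Rademacher randomness from the coupling action of the two $\hat A$ operators --- is precisely the content of the proof, and your proposal asserts rather than derives that it produces the factors $\sqrt{2q}$, $K$, and $\sum_{s=1}^q M_s$. The paper resolves it with four concrete ingredients that are absent from your sketch: (i) Maurer's \emph{vector-valued} contraction inequality to pass from the $K$-dimensional (softmax) output to a doubly indexed Rademacher sum --- this, not the graph structure, is where the $\sqrt{2}$ comes from, and an extra factor of $K$ later arrives from a Cauchy--Schwarz step $(\sum_{k=1}^K a_k)^2\le K\sum_{k=1}^K a_k^2$ combined with independence of the $\sigma_{ik}$; (ii) a reduction (borrowed from the proof of Theorem 1 in the cited work of Lv) showing that the supremum over the Frobenius ball $\|W^{(0)}\|_F\le B_1$ is attained when all the mass sits in a single column, so the hidden layer collapses to a single vector $\|{\bf w}\|_2=B_1$ and the ``sup'' can be removed; (iii) a re-indexing of the outer convolution $\hat h_i=\sum_{v\in N(i)}\hat A_{iv}h_v$ by the $s$-th neighbor, bounding each adjacency entry by $M_s$, which is what generates $\sum_{s=1}^q M_s$; and (iv) a separate lemma bounding the inner convolution, $\|\sum_{j\in N(v)}\hat A_{vj}{\bf x}_j\|_2^2\le R^2q$ (using $\|\hat A\|_2\le 1$ and $\|{\bf x}_j\|_2\le R$), which supplies the remaining $R\sqrt{q}$.

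Without (ii)--(iv) the peeling argument you describe does not close: a naive Cauchy--Schwarz against $W^{(0)}$ over the Frobenius ball, followed by a generic treatment of $\hat A$, would either lose the neighborhood structure entirely (giving a bound in terms of $\|\hat A\|$ rather than $q$ and $M_s$) or leave an unresolved supremum inside the expectation. So while nothing in your plan is wrong, the proposal as written is a correct outline of the standard route with the decisive technical steps deferred; to count as a proof it needs the vector contraction lemma, the column-collapse argument for the Frobenius-norm supremum, and the explicit neighbor-aggregation estimates.
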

where $\|\cdot\|_F$ is the Frobenius norm, $M_s =\max_{i\in [N]} |\hat A_{iv}|$ with $v\in N(i)$.
\begin{remark}
Theorem \ref{mainthm} indicates that the dominant upper bound depends linearly on the maximum number of neighbors 	$q$, as well as on the weight bounds $B_1$ and $B_2$, which are strongly influenced by the dimension $d_1$. Clearly, setting $d_1\ll d$ results in a tighter generalization bound. In the binary classification case ($K=1$), the result presented here is similar to that in \cite{shaogao2021}. Overall, the conditions stated in Theorem \ref{mainthm} are mild and reasonable.
\end{remark}
\section{Experiments}\label{experi}
\subsection{Datasets}
To evaluate the performance of the proposed SStaGCN model across various types of graph-structured data, we utilize six real-world datasets for a semi-supervised node classification task. This includes three commonly used citation networks—Cora, CiteSeer, and Pubmed \cite{Sen2008CollectiveCI}—and three heterogeneous tabular datasets: House$\_$class, VK$\_$class, and DBLP \cite{Ivanov2021BoostTC}. These heterogeneous tabular datasets differ from those in \cite{chuanshi}, containing graph data with diverse edge and node types.

In the citation networks, nodes represent documents, and undirected edges denote citation relationships between documents. Node features correspond to representative words in the documents, while the label rate indicates the percentage of node labels used for training. The Cora dataset comprises $2708$ nodes, $5429$ edges, $7$ classes, and $1433$ node features; CiteSeer has $3327$ nodes, $4732$ edges, $6$ classes, and $3703$ node features; and Pubmed includes $19717$ nodes, $4438$ edges, and $3$ classes. We use $140$, $120$, and $60$ nodes for training in Cora, CiteSeer, and Pubmed, respectively, and allocate $1000$ nodes for testing and $500$ nodes for cross-validation. This data split matches that used in GCN, Graph Attention Network (GAT, \cite{Velickovic2018GraphAN}), and GWNN \cite{Xu2019GraphWN}.

According to \cite{Ivanov2021BoostTC},  House$\_$class and VK$\_$class datasets are derived from House and VK datasets, where target labels are converted into discrete classes due to limited availability of publicly accessible heterogeneous graph-structured data. In these heterogeneous tabular datasets, features are independently defined and vary in type, scale, and meaning. The VK dataset represents a popular social network, with node features that are both numerical (e.g., last active time) and categorical (e.g., country and university affiliation). Similar to \cite{Ivanov2021BoostTC}, we categorize age into bins: $<20$, $20–25$, $25–30$, $\cdots$, $45–50$, $>50$. For the House dataset, we categorize target values within the range $[1.0, 1.5, 2.0, 2.5]$, resulting in five classes for House$\_$class and seven for VK$\_$class. In the DBLP dataset, we construct a single graph by focusing on the APA (author-paper-author) meta-path. Each dataset is divided into training, validation, and testing splits in a $0.6/0.2/0.2$ ratio across five random seeds.

Details about the citation networks and heterogeneous tabular datasets are presented in Table 1 and Table 2, respectively.

\begin{table}[th]

\tbl{Summary of the citation networks.}
{\begin{tabular}{@{}lccc@{}} \toprule
		Dataset   &Cora &CiteSeer & Pubmed \\
\colrule
		Nodes   & 2708 &3327 &19717\\
		Edges   &5429 &4732 &44338\\
		Features   &1433 &3703 &500\\
		Classes   & 7 & 6 &3\\
		Label Rate   &5.2\% &3.6\% &0.3\% \\
\botrule
\end{tabular}}
\label{citdataset}
\end{table}

\begin{table}[th]
\tbl {Summary of the heterogeneous tabular datasets.}
{\begin{tabular}{@{}lccc@{}} 
		\toprule
		Dataset   &House$\_$class&VK$\_$class & DBLP\\
\colrule
		Nodes   &20640    &54028     &14475   \\
		Edges   &182146    &213644     &40269   \\
		Features   &6    &14     &5002   \\
		Classes   &5    &7     &4   \\
		Min Target Nodes  &0.14    &13.48     &745   \\
		Max Target Nodes  &5.00    &118.39     &1197   \\
\botrule
	\end{tabular}}
\label{hetedataset}
\end{table}

\subsection{Baselines}
We compare SStaGCN with four classical graph convolutional networks—ChebyNet, GCN \cite{Kipf2017SemiSupervisedCW}, GAT, and APPNP \cite{Klicpera2019PredictTP}—as well as two ensemble-based GCN models: AdaGCN and BGNN.
\subsection{Setting}
As shown in Fig. \ref{pipSTGCN}, the proposed SStaGCN model consists of four layers, with the first and second layers referred to as the stacking and aggregation layers, respectively. The stacking layer utilizes base models from the stacking method, incorporating a combination of seven classical classifiers: KNN, Random Forest, Naive Bayes, Decision Tree, SVC \cite{Pal2020DataCW}, GBDT \cite{Friedman2001GreedyFA}, and Adaboost \cite{Freund1999ASI}. These classifiers are widely used in classical machine learning for their strengths in handling different task types.

In the aggregation layer, we employ three aggregation methods: mean, attention, and voting. For the mean approach, we calculate the average output from the stacking layer, rounding it as needed. In the attention mechanism, we treat the label data as vector $Y$, use the stacking layer's predicted output as the query vector 
$V$, and compute attention coefficients accordingly. For the voting method, we apply hard voting from ensemble learning \cite{Friedhelm2013Ensemble}.

The output of the aggregation layer serves as the input to the first layer of a two-layer GCN. In our configuration, the GCN has 16 hidden units and uses the Adam optimizer \cite{Kingma2015AdamAM} by default, with cross-entropy as the loss function. The learning rate is set to $\gamma=0.01$, the number of iterations  ${\rm itr} =500$,  weight decay at  $5e-4$, and a dropout rate of $0$.

\subsection{Results}
\begin{table}[th]
	\tbl {Average accuracy on $3$ citation networks under $30$ runs by computing the $95\%$ confidence interval via bootstrap.}
	{\begin{tabular}{@{}lccc@{}} \toprule
		Method   &Cora&CiteSeer&Pubmed\\
\colrule
		ChebyNet	&81.20$\pm0.00$   &69.80$\pm$0.00   &74.40$\pm$0.00\\
		GCN   &81.50$\pm$  0.00    &70.30$\pm$0.00   &79.00$\pm$0.00\\
		GAT   &83.00$\pm$0.70   &72.50$\pm$0.70   &79.00$\pm$0.30\\
		APPNP   &85.09$\pm$0.25   &75.73$\pm$0.30    &79.73$\pm$0.31\\
		\hline
		AdaGCN   &85.97$\pm$0.20     &76.68$\pm$0.20    &79.95$\pm$0.21           \\
		BGNN     & 41.97$\pm$0.19   &30.74$\pm$0.10       &10.32$\pm$0.10         \\
\colrule
		Sim$\_$Stacking     &43.19$\pm$2.05&62.70$\pm$0.53&87.70$\pm$0.23\\
		SStaGCN (Mean)   &90.35$\pm$0.20   &86.40$\pm$0.12      &82.30$\pm$0.19\\
		SStaGCN (Attention)   &91.60$\pm$0.18   &87.20$\pm$0.12   &82.40$\pm$0.23\\
		SStaGCN (Voting)  &\textbf{93.10$\pm$0.16}&\textbf{88.70$\pm$0.14}    &\textbf{92.07$\pm$0.20}\\
\botrule
	\end{tabular}}
	\label{accuracy_cit}
\end{table}

\begin{table}[th]
	\tbl {Average accuracy on heterogeneous tabular datasets under $30$ runs by calculating the $95\%$ confidence interval via bootstrap.}
	{\begin{tabular}{@{}lccc@{}} \toprule
		Method   &House$\_$class  &   VK$\_$class &  DBLP \\
\colrule
		ChebyNet	&54.74$\pm$0.10       &57.19$\pm$0.36              &32.14$\pm$0.00        \\
		GCN   &55.07$\pm$0.13    &56.40$\pm$0.09         &39.49$\pm$1.37         \\
		GAT   &56.50$\pm$0.22  &56.42$\pm$0.19        &76.83$\pm$0.78         \\
		APPNP   &57.03$\pm$0.27    &56.72$\pm$0.11              &79.47$\pm$1.46         \\
		
\colrule
		AdaGCN   &26.20$\pm$0.00    &46.00$\pm$0.00                   &10.06$\pm$0.00              \\
		BGNN     &66.70$\pm$0.27           &66.32$\pm$0.20      &86.94$\pm$0.74         \\
\colrule
		Sim$\_$Stacking   &53.89$\pm$0.29  &56.64$\pm$0.10     &71.58$\pm$0.64         \\
		SStaGCN (Mean)   &72.35$\pm$0.05       &66.62$\pm$0.17      &82.31$\pm$0.20         \\
		SStaGCN (Attention)  &72.40$\pm$0.12             &77.64$\pm$0.08 &82.51$\pm$0.22         \\
		SStaGCN (Voting)  &\textbf{76.13$\pm$0.12}   &\textbf{87.92$\pm$0.07}    &\textbf{92.60$\pm$0.10}  \\
\botrule
	\end{tabular}}
	\label{accuracy_het}
\end{table}

\begin{table}[th]
	\tbl {Average F1-score (macro) on $3$ citation networks under $30$  runs by computing the $95\%$ confidence interval via bootstrap.}
{\begin{tabular}{@{}lccc@{}} \toprule

		Method   &Cora   &CiteSeer    &Pubmed\\
\colrule
		ChebyNet	&77.99$\pm$0.54   &63.76$\pm$0.34        &77.74$\pm$0.42         \\
		GCN   &82.89$\pm$0.30   &70.65$\pm$0.37         &78.83$\pm$0.32         \\
		GAT   &83.59$\pm$0.25   &70.62$\pm$0.29         &77.77$\pm$0.40         \\
		APPNP   &84.29$\pm$0.22    &71.05$\pm$0.38  &79.66$\pm$0.31         \\
\colrule
		AdaGCN   &79.55$\pm$0.19      &63.62$\pm$0.19       &78.55$\pm$0.21         \\
		BGNN     &40.81$\pm$0.25       &32.73$\pm$0.13      &8.46$\pm$0.08         \\
\colrule
		Sim$\_$Stacking   &44.02$\pm$1.61          &60.86$\pm$0.56              &87.31$\pm$0.13\\
		SStaGCN (Mean)     &90.66$\pm$0.18          &86.42$\pm$0.12             &82.30$\pm$0.19\\
		SStaGCN (Attention)  &91.69$\pm$0.14        &87.24$\pm$0.14             &82.45$\pm$0.23\\
		SStaGCN (Voting)     &\textbf{92.76$\pm$0.16}   &\textbf{88.73$\pm$0.14} &\textbf{92.07$\pm$0.20}\\
\botrule
	\end{tabular}}
	\label{f1 score_cit}
\end{table}

\begin{table}[th]
	\tbl {Average F1-score(macro) on heterogeneous tabular datasets  under $30$ runs by calculating the $95\%$ confidence interval via bootstrap.}
{\begin{tabular}{@{}lccc@{}} \toprule

		Method   &House$\_$class  &   VK$\_$class &  DBLP \\
	\colrule
		ChebyNet	&31.34$\pm$0.12   &57.44$\pm$0.27   &26.84$\pm$0.62                   \\
		GCN   &54.95$\pm$0.14  &56.52$\pm$0.09     &38.5$\pm$0.97         \\
		GAT   &56.54$\pm$0.68   &56.41$\pm$0.07       &77.1$\pm$1.86         \\
		APPNP   &57.88$\pm$0.32    &56.61$\pm$0.07       &79.34$\pm$0.23         \\
\colrule
		AdaGCN   &25.01$\pm$0.00      &37.03$\pm$0.00     &9.60$\pm$0.00              \\
		BGNN     &66.48$\pm$0.22    &66.18$\pm$0.11         &87.2$\pm$0.60         \\
\colrule
		Sim$\_$Stacking  &53.32$\pm$0.15             &56.11$\pm$0.08   &71.49$\pm$0.31         \\
		SStaGCN (Mean)    &72.23$\pm$0.04    &66.74$\pm$0.21     &82.13$\pm$0.38         \\
		SStaGCN (Attention)  &72.36$\pm$0.09      &77.62$\pm$0.10        &82.68$\pm$0.12         \\
		SStaGCN (Voting)     &\textbf{75.45$\pm$0.82}   &\textbf{87.84$\pm$0.04}    &\textbf{92.64$\pm$0.06}         \\
\botrule
	\end{tabular}}
	\label{f1 score_het}
\end{table}

\begin{table}[th]
	\tbl {p-values of the paired t-test of SStaGCN (Voting) with competitors on $6$ different datasets (Cora,  Citeseer,  Pubmed, House$\_$class, VK$\_$class, and  DBLP).}
	{\begin{tabular}{@{}lcccccc@{}} \toprule

		Models   &Cora   &CiteSeer    &Pubmed    &House$\_$class  &   VK$\_$class &  DBLP \\
\colrule
		ChebyNet   &2.59e-06      &2.77e-08   &1.17e-06     &3.51e-10      &1.11e-11     &6.97e-09 \\
		GCN   &4.19e-16      &5.15e-17   &1.84e-15     &2.36e-08      &2.25e-11     &2.48e-07 \\
		GAT   &2.10e-19      &1.54e-20   &8.84e-19     &3.35e-08      &1.38e-09     &4.95e-06 \\
		APPNP&6.86e-42      &7.12e-42   &6.25e-41     &7.05e-15      &1.39e-21    &2.26e-09       \\
		AdaGCN   &1.82e-19      &1.23e-20  &8.42e-19    &3.05e-38      &7.94e-43     &1.69e-35 \\
		BGNN   &1.02e-24      &2.33e-21   &4.74e-22     &6.54e-07      &1.07e-08     &0.11e-03 \\
		Sim$\_$Stacking   &5.61e-12    &3.79e-15   &2.52e-09   &4.56e-08      &6.07e-11     &1.07e-06 \\
\botrule
	\end{tabular}}
	\label{t test}
\end{table}
The results of the comparative evaluation for node classification are summarized in Tables 3-11. In these tables, SStaGCN (Mean) refers to the use of the mean aggregation mechanism in the second layer of SStaGCN, while SStaGCN (Attention) and SStaGCN (Voting) correspond to the attention and voting mechanisms, respectively. We report the accuracy, macro F1-score, and training time on the test set for the proposed SStaGCN model and other methods. The experimental results demonstrate a significant improvement of the SStaGCN model over the baselines. Specifically, for the three public citation networks, SStaGCN (Voting) achieves an improvement in accuracy (and F1-score) of approximately $8\%$ ($8\%$), $12\%$ ($17\%$), and $13\%$ ($12\%$) for the Cora, CiteSeer, and Pubmed datasets, respectively. For the heterogeneous tabular datasets, SStaGCN (Voting) shows an improvement in accuracy (and F1-score) of about $9\%$ ($9\%$), $21\%$ ($21\%$), and $6\%$ ($5\%$) for the House$\_$class, VK$\_$class, and DBLP datasets, respectively.

AdaGCN performs poorly on the heterogeneous tabular datasets, possibly because AdaGCN is designed for deeper GCN architectures, which may mix node features from different clusters as the GCN layers deepen. SStaGCN, by contrast, enhances the performance of GCN, providing better results across distinct types of graph-structured data. Additionally, the paired t-test results in Table 7 indicate that the proposed SStaGCN model significantly outperforms the simplified stacking and other GCN models.

This impressive improvement can be explained as follows:

\begin{itemlist}
	\item  The stacking and aggregation steps in SStaGCN provide a dimensionality reduction effect, making the graph data more discernible. For example, in the Cora dataset, the feature size reduces from  $2708\times1433$  to $2708\times7 $ after stacking and aggregation. This results in a relatively smaller $d_1$, as discussed in Remark 1, significantly enhancing both the predictive power and computational efficiency of the subsequent graph convolution model.
	
\item In the aggregation step of the SStaGCN model, the mean and attention mechanisms somewhat disrupt the pre-classification results, making them less suitable for feature extraction, whereas the voting mechanism preserves these results. Consequently, experimental results demonstrate that SStaGCN (Voting) is more effective across the six datasets.

\item Simplified stacking can effectively extract useful attributes but overlooks graph structure information, while GCN models are limited in feature extraction. Therefore, the SStaGCN model combines the strengths of simplified stacking and GCN, achieving both higher classification accuracy and reduced computational cost.
\end{itemlist}

Tables 8 and 9 present a comparison of training times between SStaGCN and other methods. BGNN achieves the fastest runtime on the three citation networks, followed closely by our SStaGCN method. However, SStaGCN runs faster than other methods on the heterogeneous tabular datasets, with the exception of the DBLP dataset. We attribute this to the extra computation time required for the stacking and aggregation steps, which ultimately enhance efficiency when feeding the feature outputs into the GCN model.

\begin{table}[th]
	\tbl {Average training time(s) on $3$ citation networks by computing  the $95\%$ confidence interval via bootstrap.}
	{\begin{tabular}{@{}lccc@{}} \toprule
		Method     &Cora  &CiteSeer    &Pubmed\\
\colrule
		ChebyNet	&22.74$\pm$1.24             &30.87$\pm$1.21      &124.99$\pm$1.77         \\
		GCN   &13.41$\pm$0.16   &99.21$\pm$0.98      &55.61$\pm$0.73         \\
		GAT   &20.98$\pm$0.46     &30.74$\pm$1.40     &126.33$\pm$1.80         \\
		APPNP   &203.75$\pm$0.15     &55.40$\pm$0.40           &457.62$\pm$12.77         \\
\colrule
		AdaGCN   &772.26$\pm$83.56    &2129.02$\pm$148.97         &2098.10$\pm$275.88         \\
		BGNN     &\textbf{1.33$\pm$0.00}  & \textbf{2.40$\pm$0.00}       & \textbf{2.54$\pm$0.00}        \\
\colrule
		Sim$\_$Stacking  &11.9$\pm$0.08  &27.9$\pm$0.24    & 79.1$\pm$1.40 \\
		SStaGCN (Mean)	& 10.9$\pm$0.13 	& 17.2$\pm$0.13  	&89.6$\pm$2.20\\
		SStaGCN (Attention)	&11.2$\pm$0.24	&17.6$\pm$0.41	&87.6$\pm$0.96\\
		SStaGCN (Voting)	&16.2$\pm$0.19	&29.6$\pm$1.40	&13.1$\pm$2.61\\
\botrule
	\end{tabular}}
	\label{time_cit}
\end{table}

\begin{table}[th]
	\tbl {Average training time(s)  on heterogeneous tabular datasets  by calculating the $95\%$ confidence interval via bootstrap.}
	{\begin{tabular}{@{}lccc@{}} \toprule
		Method     &House$\_$class  &   VK$\_$class &  DBLP \\
\colrule
		ChebyNet	&833.82$\pm$0.00             &1394.68$\pm$0.00       &8890.74$\pm$0.00         \\
		GCN   &46.06$\pm$0.80   &120.1$\pm$3.35    &268.5$\pm$5.35         \\
		GAT   &197.6$\pm$3.08   &410.9$\pm$9.95       &205.3$\pm$2.00         \\
		APPNP   &129.7$\pm$3.26     &383.8$\pm$12.02      &176.8$\pm$5.58         \\
\colrule
		AdaGCN   &607.31$\pm$0.00     &511.41$\pm$0.00                  &590.90$\pm$0.00              \\
		BGNN     &26.37$\pm$1.65     &93.47$\pm$6.23        &\textbf{50.25$\pm$3.04}         \\
\colrule
		Sim$\_$Stacking  &\textbf{16.41$\pm$0.36}            & \textbf{43.58$\pm$2.78}       &380.8$\pm$12.82        \\
		SStaGCN (Mean)	&52.94$\pm$0.51   &132.9$\pm$1.61       &188.7$\pm$0.54         \\
		SStaGCN (Attention)&57.38$\pm$1.75             &133.2$\pm$1.11        &246.2$\pm$0.37        \\
		SStaGCN (Voting)	&59.69$\pm$0.82       &154.1$\pm$1.02         &310.7$\pm$0.49     \\
\botrule
	\end{tabular}}
	\label{time_het}
\end{table}
To demonstrate the effect of stacking and aggregation steps in the proposed model, we provide a t-SNE visualization \cite{Maaten2008VisualizingDU} in Figs. \ref{hom_tsne_cls_fea} and \ref{het_tsne_cls_fea}. These figures show that the combination of stacking and aggregation effectively extracts features, enhancing the discriminative power of the graph data.
\begin{figure}[!htpb]
	\centering
	\subfigure[GCN]{\label{gcn:raw}\includegraphics[width=0.41\textwidth]{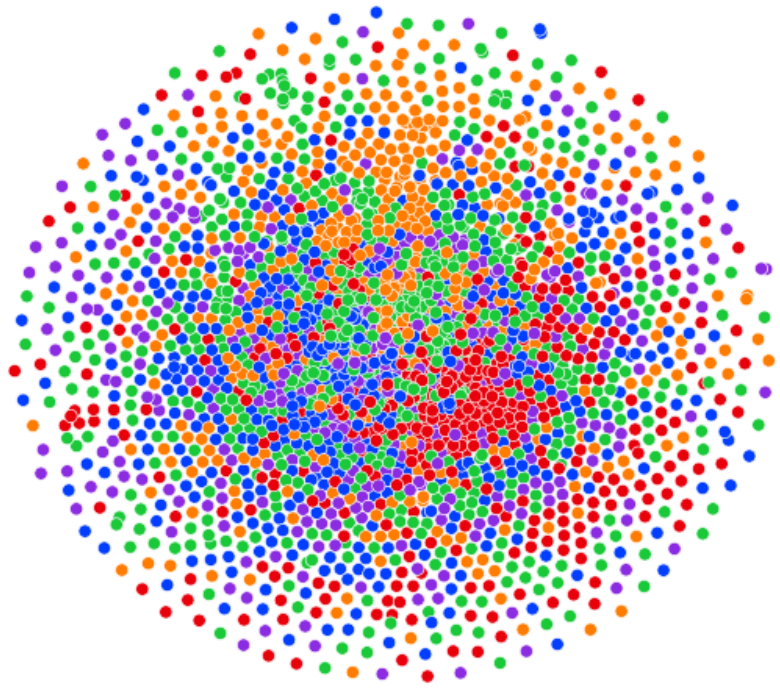}}
	\mbox{\hspace{0.1cm}}
	\subfigure[SStaGCN]{\label{sstagcn:stack}\includegraphics[width=0.46\textwidth]{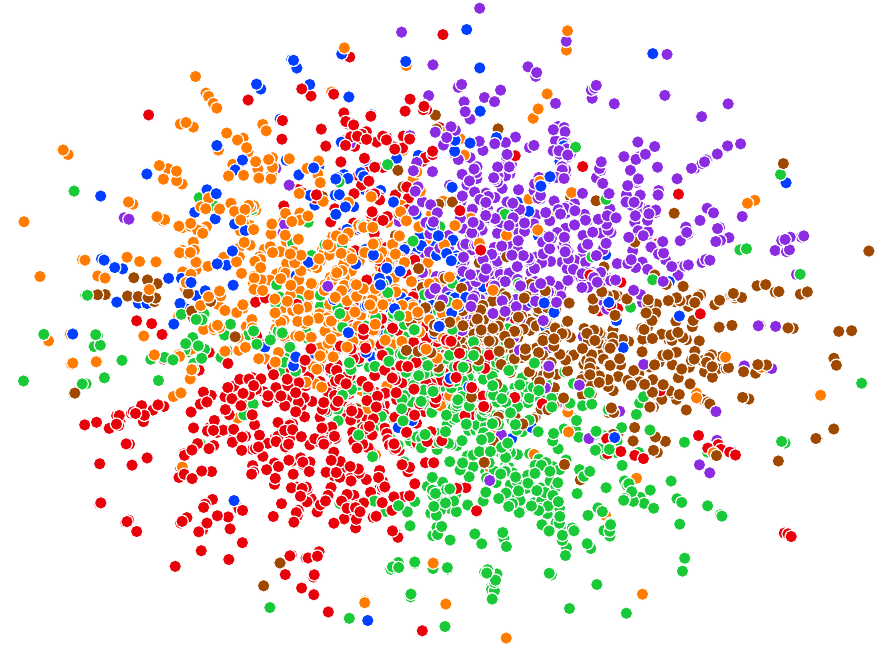}}
	\mbox{\hspace{0.1cm}}
	\vspace{-0.3cm}
	\caption{Visualization of  classification features by the GCN (left) and the features  after conducting stacking and aggregation steps in the SStaGCN model (right) on CiteSeer dataset, node colors denote classes.}
	\label{hom_tsne_cls_fea}
\end{figure}

\begin{figure}[!htpb]
	\centering
	\subfigure[GCN]{\label{hetgcn:raw}\includegraphics[width=0.42\textwidth]{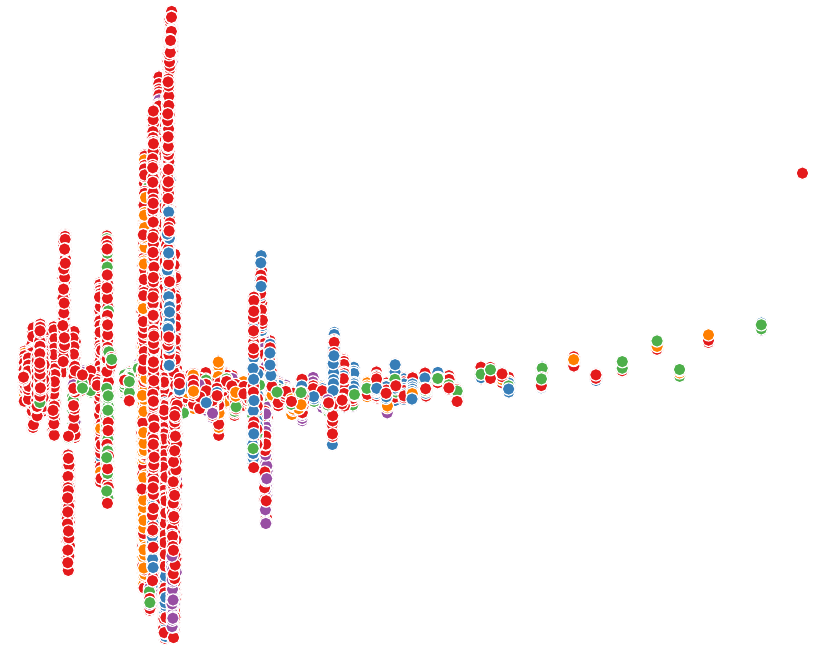}}
	\mbox{\hspace{0.1cm}}
	\subfigure[SStaGCN]{\label{hetsstagcn:stack}\includegraphics[width=0.43\textwidth]{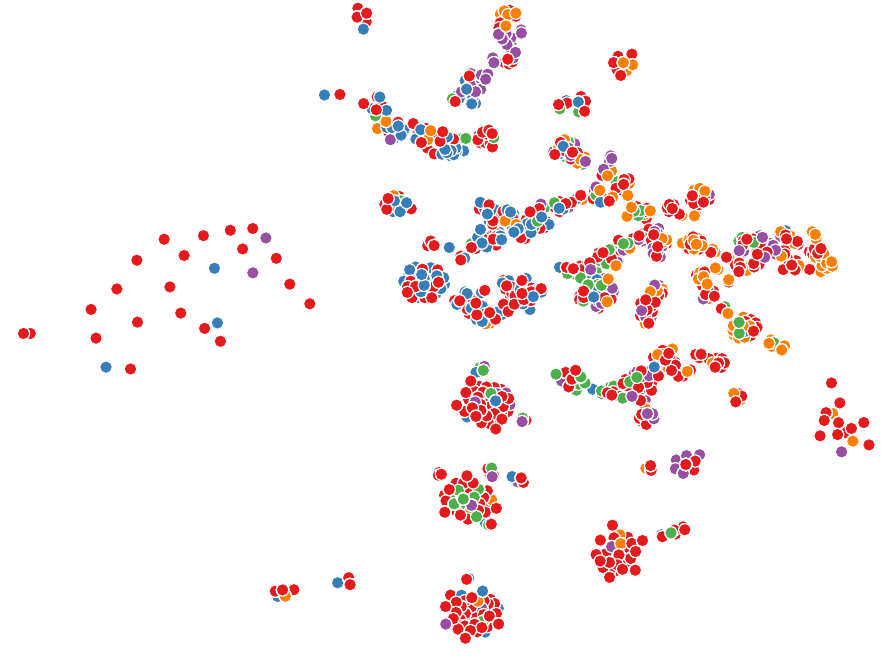}}
	\mbox{\hspace{0.1cm}}
	\vspace{-0.3cm}
	\caption{Visualization of  classification features by the GCN (left) and the features  after conducting stacking and aggregation steps in the SStaGCN model (right) on DBLP dataset, node colors denote classes.}
	\label{het_tsne_cls_fea}
\end{figure}
Table 10 suggests that using all seven classifiers is unnecessary. For example, on the Cora dataset, the combination of KNN, Random Forest, and Naive Bayes achieves the highest accuracy with minimal computational cost (only 3 seconds). This indicates that each of the seven classifiers has unique strengths suited to different tasks. However, selecting the optimal combination of base models remains an open area for theoretical analysis.
\begin{table}[th]\footnotesize
	\tbl {Accuracy and training time(s) on Cora dataset by combinations of different classifiers based on SStaGCN model.}
	{\begin{tabular}{@{}ccccccccc@{}} \toprule
		KNN &Random Forest &Naive Bayes &Decision Tree &GBDT   &Adaboost   &SVC    &Accuracy      &Training Time\\
	\colrule
		&$\surd$ &$\surd$ & & & & &91.2 &13.90 \\
		$\surd$  &$\surd$ &$\surd$ & & & & &\textbf{93.6} &16.60\\
		& & & &$\surd$ &$\surd$ & &84.2 &567.9\\
		& &$\surd$ &$\surd$ & & &$\surd$ &92.9 &144.7\\
		&$\surd$ &$\surd$ & & & &$\surd$ &93.1 &149.5\\
		&$\surd$ &$\surd$ &$\surd$ & & & &92.8 &15.90\\
		$\surd$  &$\surd$ &$\surd$ &$\surd$ & & & &93.4 &18.80\\
		$\surd$  &$\surd$ &$\surd$ &$\surd$ &$\surd$ & & &92.9 &568.3\\
		$\surd$  &$\surd$ &$\surd$ &$\surd$ &$\surd$ &$\surd$ & &92.9 &570.7\\
		$\surd$  &$\surd$ &$\surd$ &$\surd$ &$\surd$ &$\surd$ &$\surd$ &92.8 &708.5\\
\botrule
	\end{tabular}}
\label{classifier}
\end{table}
To further illustrate the impact of simplified stacking on the over-smoothing problem, we conduct an additional experiment on this topic. In Fig. \ref{over-smoothing_layer_GCN}, we observe that a conventional GCN tends to blend node features from different clusters as the number of convolutional layers increases. However, as shown in Fig. \ref{over-smoothing_layer_SStaGCN} and Table 11\footnote{The number of layers excludes those included in the stacking and aggregation parts of SStaGCN, which consist of only two layers.}, the proposed SStaGCN effectively mitigates the over-smoothing phenomenon and enhances accuracy.
\begin{figure*}[!htpb]
	\centering
	\subfigure[2-layer]{\includegraphics[width=0.28\textwidth]{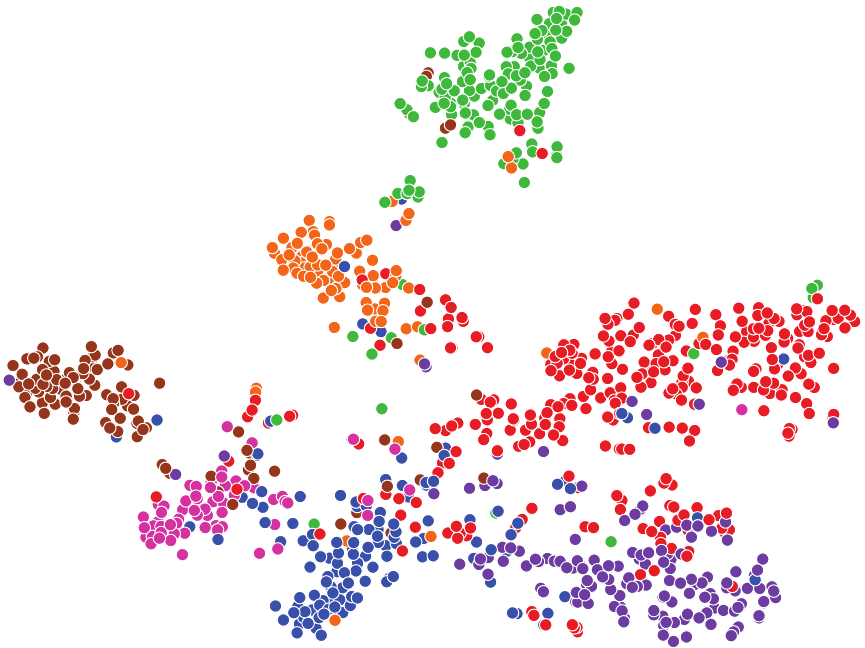}}
	\mbox{\hspace{0.1cm}}
	\subfigure[3-layer]{\includegraphics[width=0.28\textwidth]{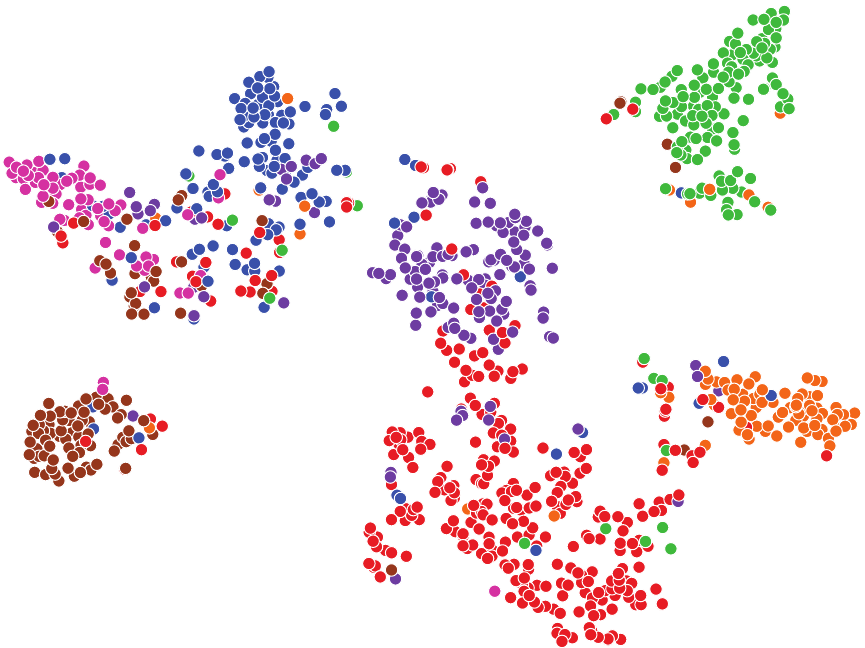}}
	\mbox{\hspace{0.1cm}}
	\subfigure[4-layer]{\includegraphics[width=0.28\textwidth]{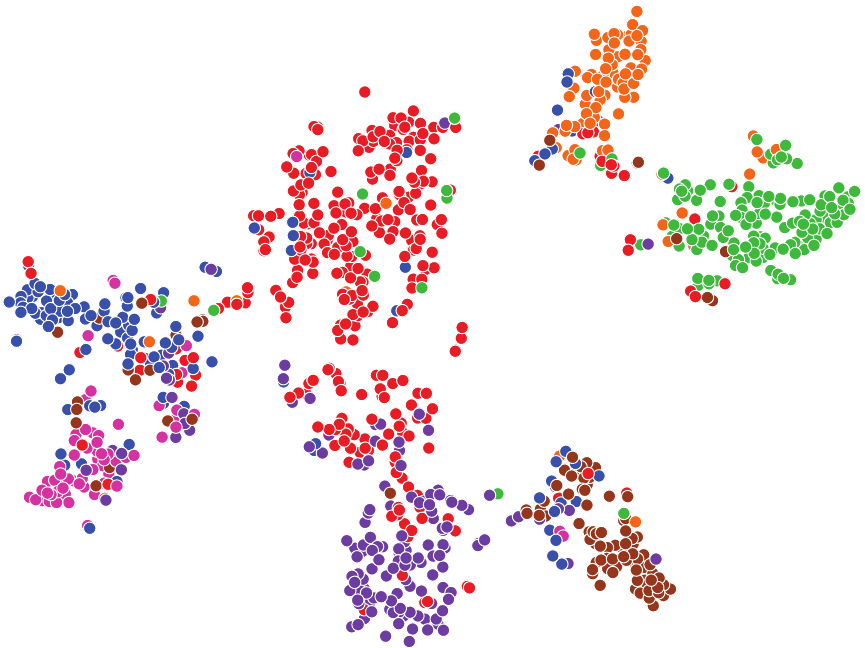}}
	\mbox{\hspace{0.1cm}}
	\subfigure[5-layer]{\includegraphics[width=0.28\textwidth]{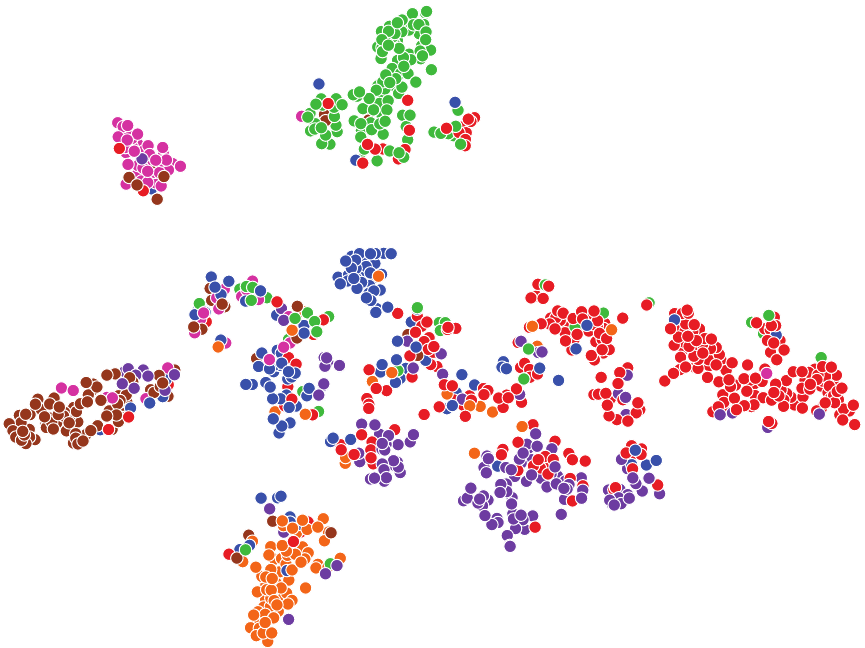}}
	\mbox{\hspace{0.1cm}}
	\subfigure[6-layer]{\includegraphics[width=0.28\textwidth]{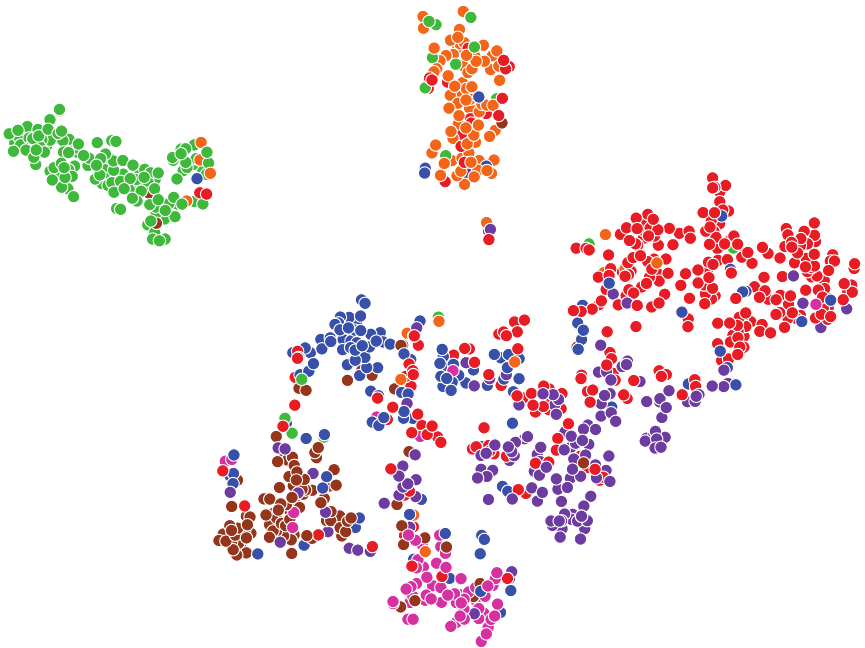}}
	\mbox{\hspace{0.1cm}}
	\subfigure[7-layer]{\includegraphics[width=0.28\textwidth]{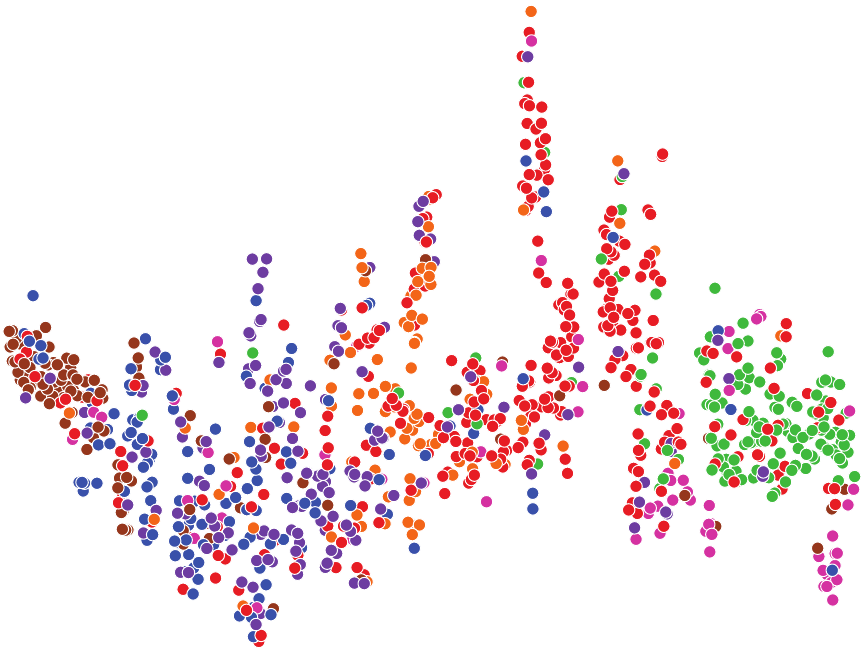}}
	\mbox{\hspace{0.1cm}}
	\vspace{-0.3cm}
	\caption{Visualization  of  final classification features via GCN  on  Cora  dataset with $2$, $3$, $4$, $5$, $6$, $7$ layers, node colors denote classes.}
	\label{over-smoothing_layer_GCN}
\end{figure*}

\begin{figure*}[!htpb]
	\centering
	\subfigure[2-layer]{\includegraphics[width=0.28\textwidth]{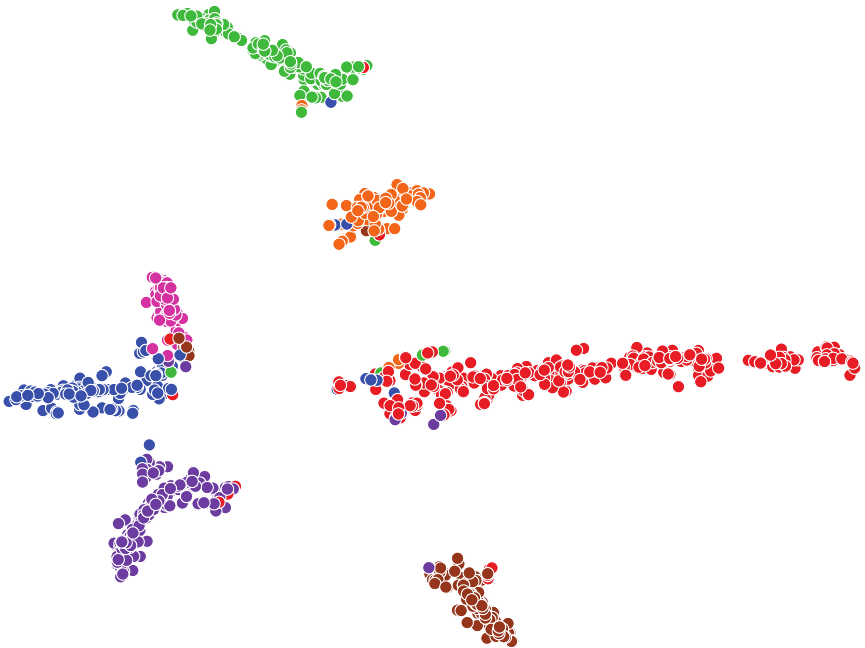}}
	\mbox{\hspace{0.1cm}}
	\subfigure[3-layer]{\includegraphics[width=0.28\textwidth]{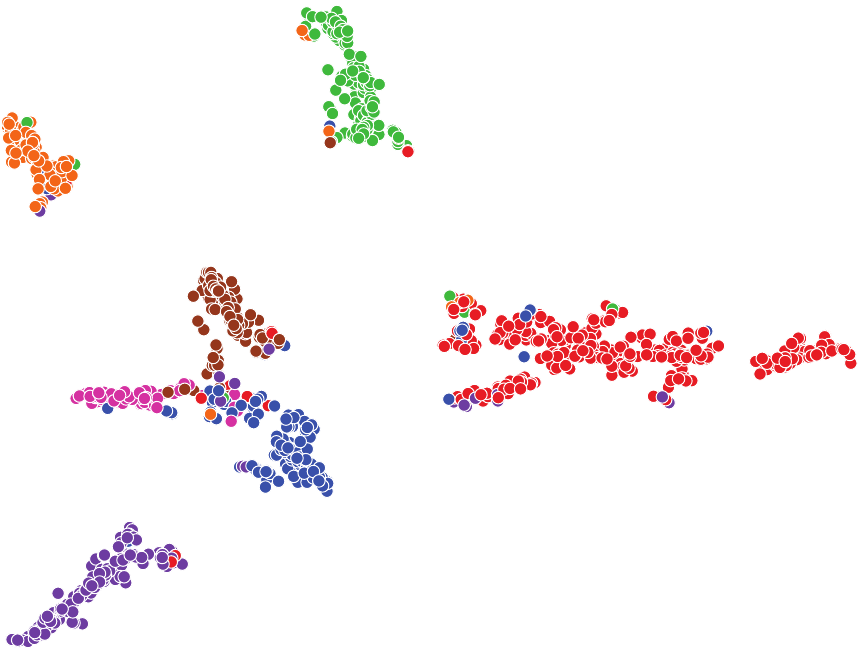}}
	\mbox{\hspace{0.1cm}}
	\subfigure[4-layer]{\includegraphics[width=0.28\textwidth]{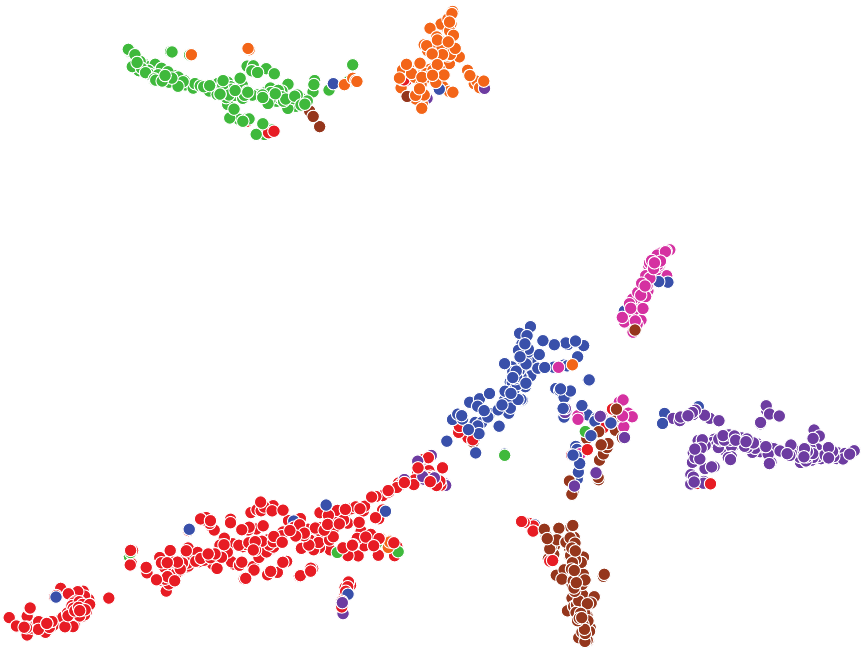}}
	\mbox{\hspace{0.1cm}}
	\subfigure[5-layer]{\includegraphics[width=0.28\textwidth]{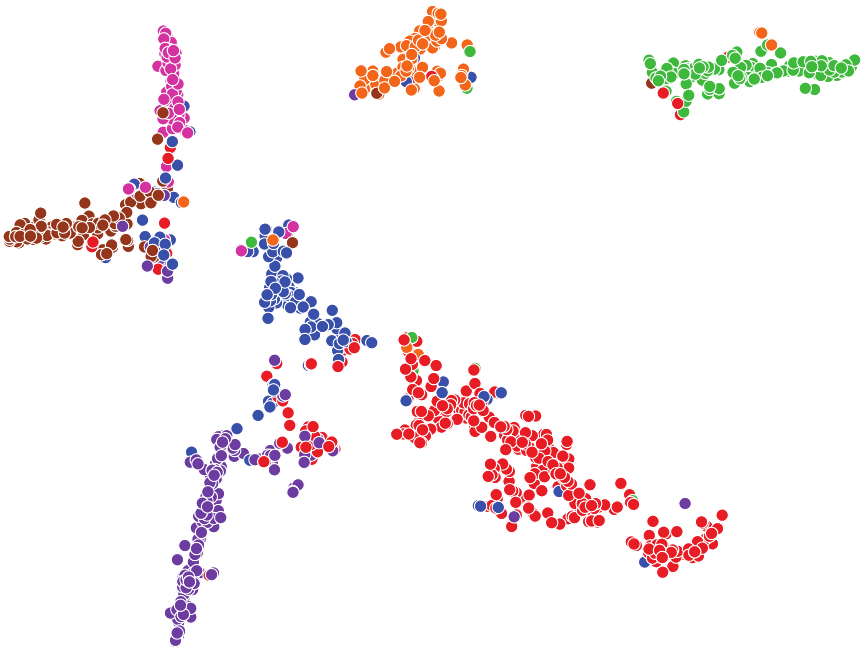}}
	\mbox{\hspace{0.1cm}}
	\subfigure[6-layer]{\includegraphics[width=0.28\textwidth]{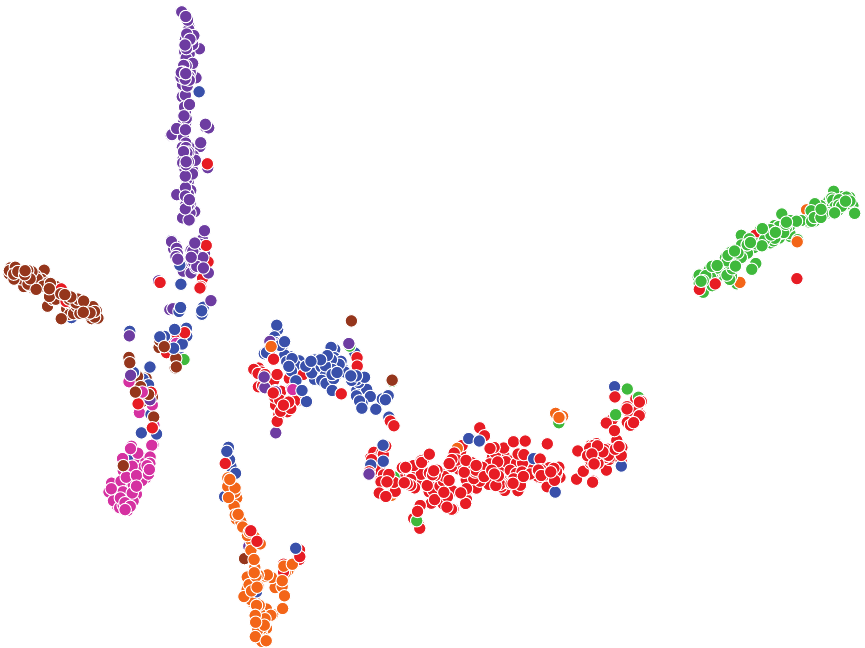}}
	\mbox{\hspace{0.1cm}}
	\subfigure[7-layer]{\includegraphics[width=0.28\textwidth]{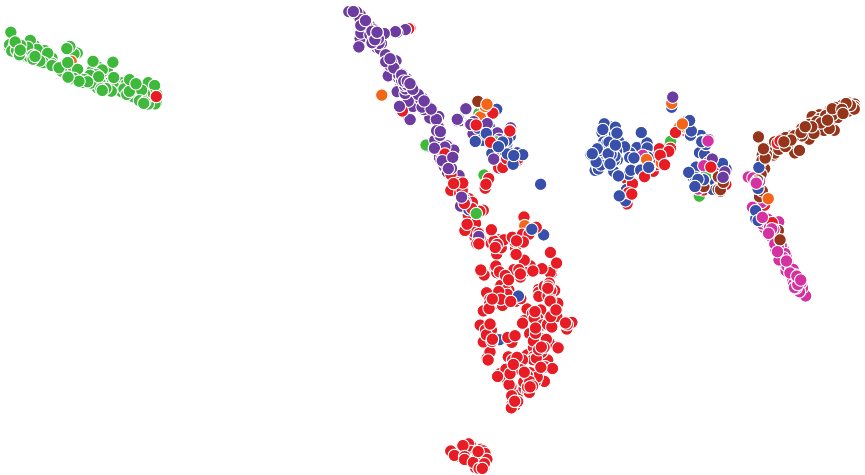}}
	\mbox{\hspace{0.1cm}}
	\vspace{-0.3cm}
	\caption{Visualization  of  final classification features via SStaGCN  on  Cora  dataset with $2$, $3$, $4$, $5$, $6$, $7$ layers, node colors denote classes.}
	\label{over-smoothing_layer_SStaGCN}
\end{figure*}

\begin{table}[th]
	\tbl {Accuracy comparison between GCN and SStaGCN models on Cora dataset using  distinct number of layers.}
	{\begin{tabular}{@{}ccccccc@{}} \toprule
		Method &2-layer    &3-layer       &4-layer     &5-layer   &6-layer   &7-layer  \\
\colrule
		GCN &80.5 &80.4 &75.8 &71.9 &72.6 &60.8 \\
		SStaGCN  &93.3 &88.8 &87.5 &86.4 &84.8 &84.3 \\
\botrule
	\end{tabular}}
	\label{Oversmooth_SStaGCN}
\end{table}
Overall, these experiments demonstrate the superiority of SStaGCN model over competitors.

\subsection{Visualization}
\begin{figure}[!htpb]
	\centering
	\subfigure[GCN]{\label{gcn:final}\includegraphics[width=0.23\textwidth]{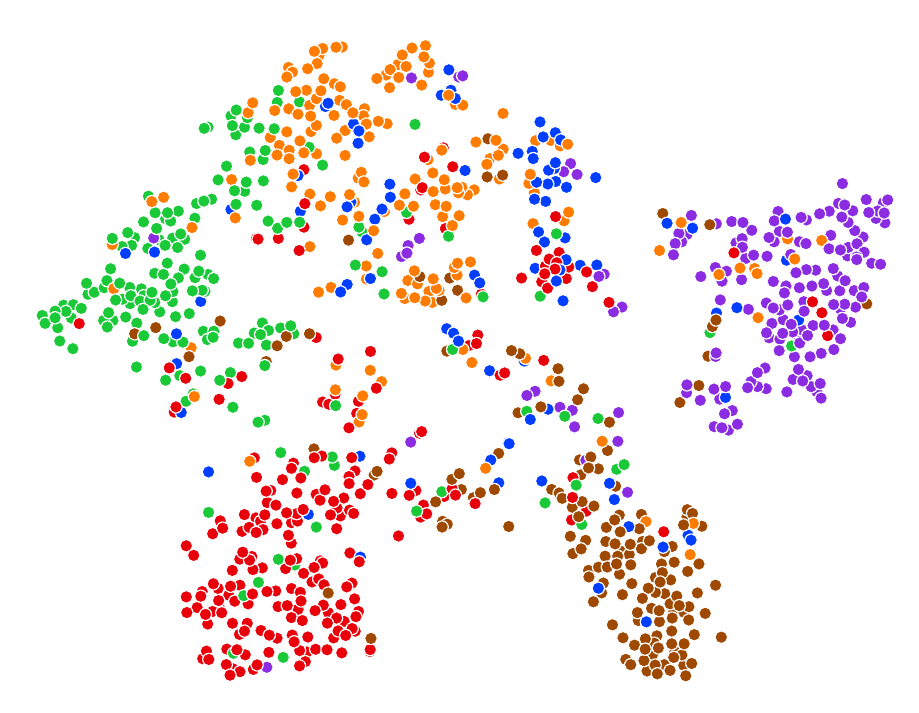}}
	\mbox{\hspace{0.1cm}}
	\subfigure[AdaGCN]{\label{sstagcn:final}\includegraphics[width=0.22\textwidth]{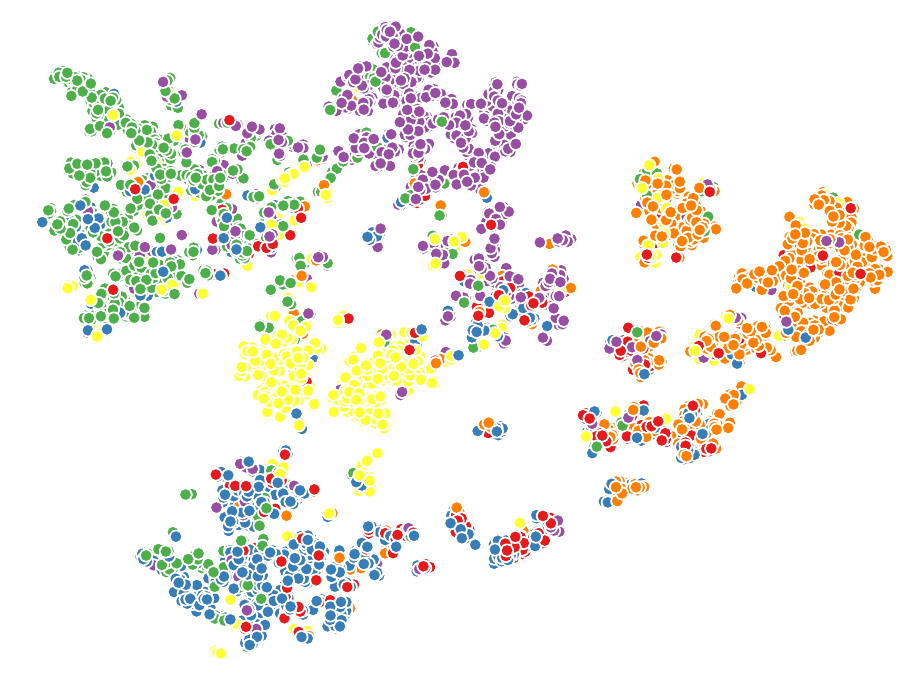}}
	\mbox{\hspace{0.1cm}}
	\subfigure[BGNN]{\label{gcn:final}\includegraphics[width=0.23\textwidth]{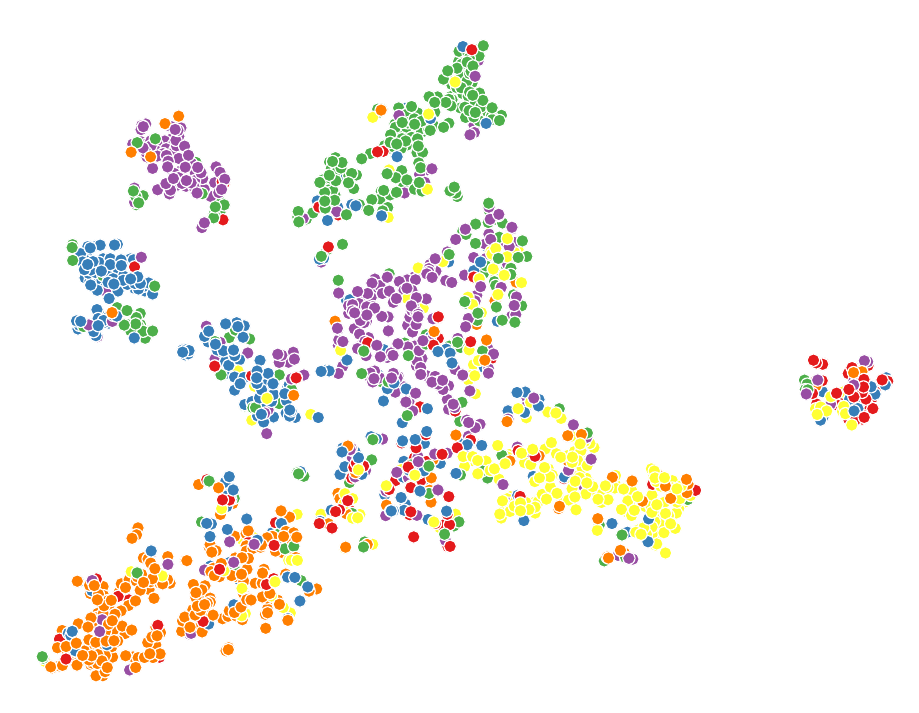}}
	\mbox{\hspace{0.1cm}}
	\subfigure[SStaGCN]{\label{sstagcn:final}\includegraphics[width=0.22\textwidth]{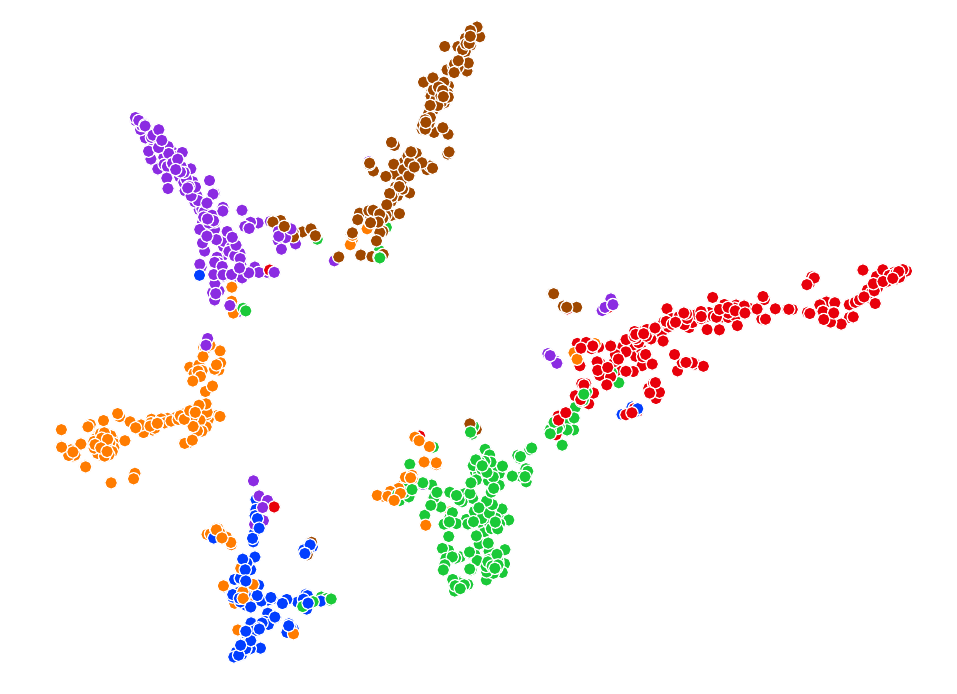}}
	\mbox{\hspace{0.1cm}}
	\vspace{-0.3cm}
	\caption{Visualization  of  final classification features via (a). GCN , (b). AdaGCN ,  (c). BGNN , and (d). SStaGCN model on Citeseer dataset,   node colors denote classes.}
	\label{hom_tsne_fincls_fea}
\end{figure}

\begin{figure}[!htpb]
	\centering
	\subfigure[GCN]{\label{gcn:final}\includegraphics[width=0.23\textwidth]{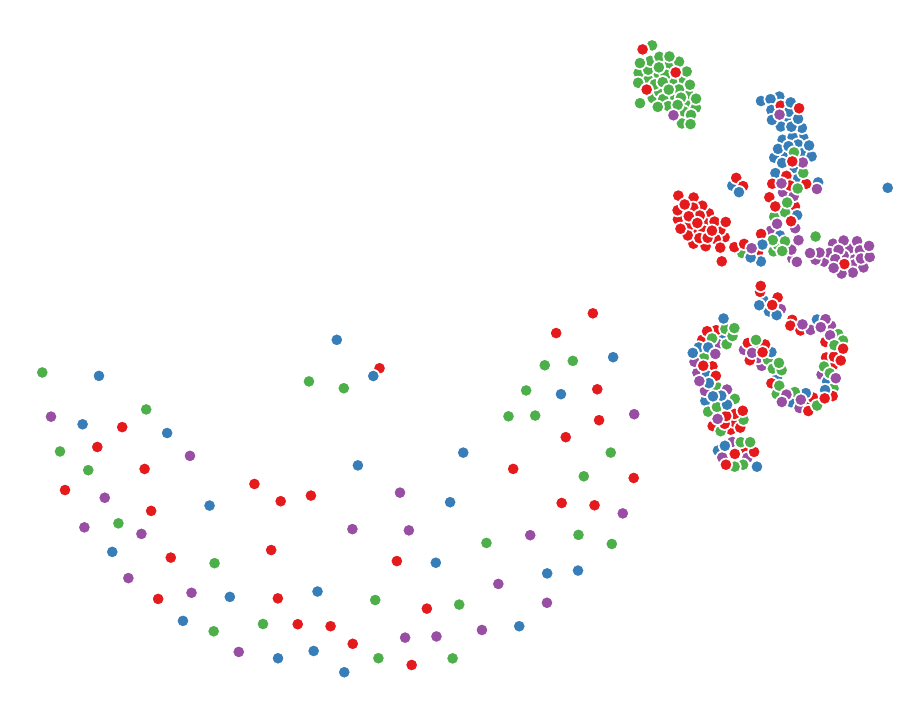}}
	\mbox{\hspace{0.1cm}}
	\subfigure[AdaGCN]{\label{sstagcn:final}\includegraphics[width=0.23\textwidth]{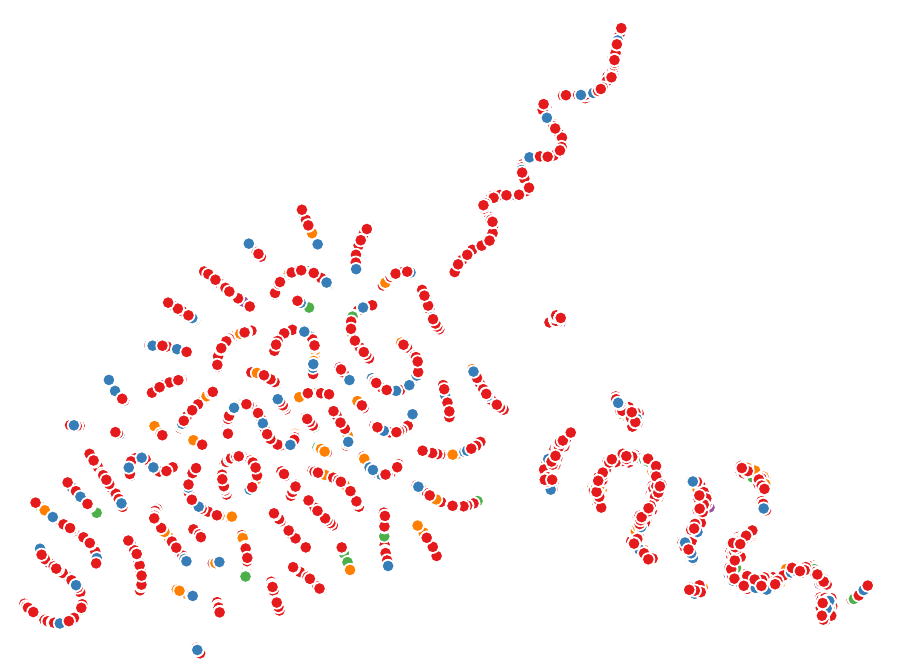}}
	\mbox{\hspace{0.1cm}}
	\subfigure[BGNN]{\label{sstagcn:final}\includegraphics[width=0.22\textwidth]{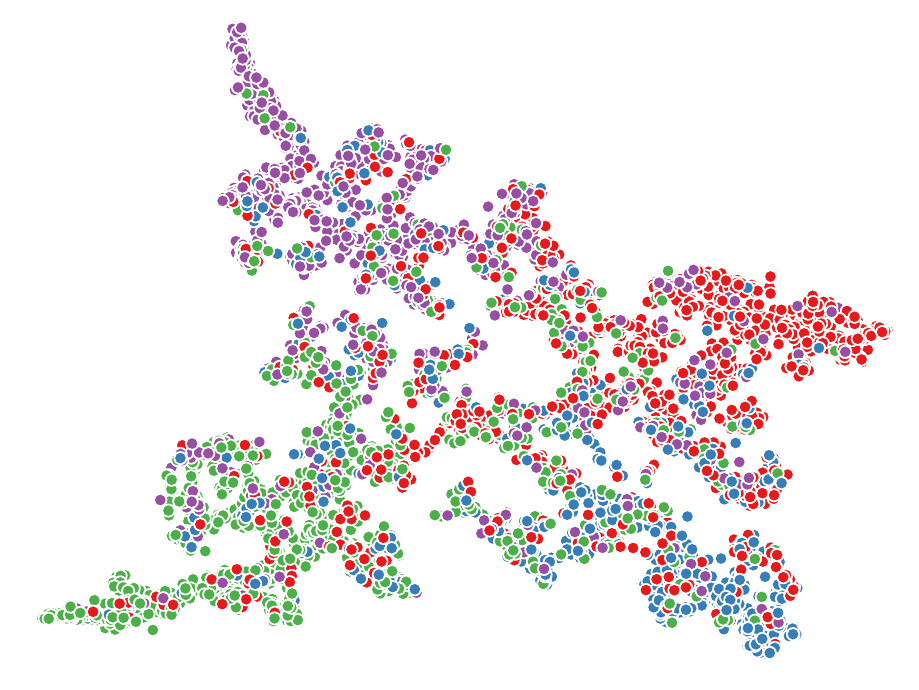}}
	\mbox{\hspace{0.1cm}}
	\subfigure[SStaGCN]{\label{sstagcn:final}\includegraphics[width=0.22\textwidth]{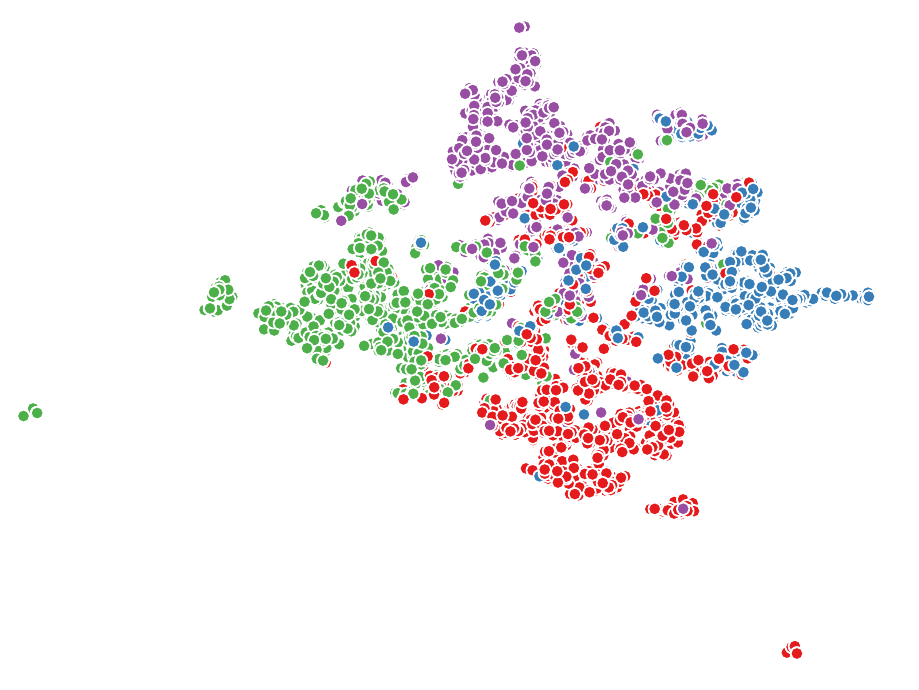}}
	\mbox{\hspace{0.1cm}}
	\vspace{-0.3cm}
	\caption{Visualization  of  final classification features via (a). GCN , (b). AdaGCN , (c). BGNN,  and (d). SStaGCN model on  DBLP dataset,  node colors denote classes.}
	\label{het_tsne_fincls_fea}
\end{figure}
To further illustrate the performance of SStaGCN, we plot the final classification features of GCN, AdaGCN, BGNN, and our SStaGCN. Fig. \ref{hom_tsne_fincls_fea} (for CiteSeer) and Fig. \ref{het_tsne_fincls_fea} (for DBLP) display the final classification features of each method on these datasets. As shown in these figures, the proposed SStaGCN misclassifies relatively fewer points, while GCN, AdaGCN, and BGNN result in more misclassified classes.

\section{Conclusion}\label{concl}
Traditional GCNs often face the over-smoothing problem. In this work, we propose a novel GCN architecture, SStaGCN, which leverages stacking and aggregation techniques to capture pre-classified data features, followed by GCN for predictions on distinct graph-structured data. SStaGCN effectively explores and utilizes features for heterogeneous graph data in a stacked manner. By incorporating classical machine learning methods, we design a GCN model that provides a versatile framework for handling diverse types of graph-structured data, offering new insights into understanding GCNs. Extensive experiments demonstrate that the proposed model outperforms several state-of-the-art competitors in terms of accuracy, F1-score, and training time. The framework presented here could also be extended to regression tasks. Additionally, we believe this method can address various types of heterogeneous graph data beyond tabular data. A promising future research direction is to investigate deeper GCNs within our framework, as suggested by \cite{Sun2019AdaGCNAG}. The source code of SStaGCN will be issued soon \footnote{https://github.com/dragon0916/SStaGCN.}.
\section*{Appendix}\label{section:appe}
In this part, we provide a detailed proof of Theorem \ref{mainthm}. Before proceeding with the proof, we introduce several supporting lemmas. First, we present the contraction inequality for Rademacher complexity in vector form.
\begin{lemma}
	\cite{2016Maurer}
	Let ${\cal X}$ be any set, $({\bf x}_1,\cdots, {\bf x}_N)\in {\cal X}$ and ${\cal F}$ be a class of functions $f:{\cal X}\to {\mathbb R}^K$ ad let $\tau_i: {\mathbb R}^K\to {\mathbb R}$ have Lipschitz constant $M$. Then
	$${\mathbb E}\sup_{f\in {\cal F}}\sum^N_{i=1}\sigma_i \tau_i f({\bf x}_i)\leq \sqrt{2} M{\mathbb E}\sup_{f\in {\cal F}} \sum^N_{i=1}\sum^K_{k=1}\sigma_{ik} f_k({\bf x}_i),$$
	where $\sigma_{ik}$ is an independent doubly indexed Rademacher sequence and $f_k({\bf x}_i)$ is the $k-$th component of $f({\bf x}_i)$.	
	\label{vector_contr}
\end{lemma}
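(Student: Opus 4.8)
The plan is to follow the classical Rademacher-complexity route for uniform convergence, using Lemma \ref{vector_contr} to strip the vector-valued loss and then peeling the two-layer GCN one map at a time. First I would establish the generic symmetrization bound: treating the $m$ labelled node predictions $\{(\tilde{\bf x}_j,y_j)\}_{j=1}^m$ as the sample and viewing $\ell(y_j,f(\tilde{\bf x}_j))$ as a bounded functional of it, McDiarmid's bounded-difference inequality together with the standard symmetrization argument gives, with probability at least $1-\delta$,
$$
\mathcal{E}(f)\le \mathcal{E}_N(f)+2\,\mathcal{R}(\ell\circ\mathcal{F})+\sqrt{\tfrac{2\log(2/\delta)}{N}},
$$
where $\mathcal{R}(\ell\circ\mathcal{F})$ is the empirical Rademacher average of the loss-composed class; the last summand is already the confidence term in the statement, so the problem reduces to controlling $\mathcal{R}(\ell\circ\mathcal{F})$.

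Second, I would peel off the loss. Since $\ell(y,\cdot)$ is $\alpha_\ell$-Lipschitz and the GCN output $f(\tilde{\bf x})\in\mathbb{R}^K$ is vector-valued, I invoke Lemma \ref{vector_contr} with $h_i=\ell(y_i,\cdot)$ and $L=\alpha_\ell$. This replaces $\mathcal{R}(\ell\circ\mathcal{F})$ by $\sqrt{2}\,\alpha_\ell$ times the doubly-indexed Rademacher average $\frac1m\mathbb{E}\sup_{f}\sum_{j}\sum_{k=1}^K\sigma_{jk}f_k(\tilde{\bf x}_j)$ of the raw output class $\mathcal{F}$, and is precisely the source of the $\sqrt2$ factor in the bound.

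Third, the heart of the argument is bounding the Rademacher average of $\mathcal{F}$ by unrolling the GCN. For a labelled node $v_j\in\Omega$ the prediction is the $j$-th row of $\hat A\,\mathrm{ReLU}(\hat A X W^{(0)})W^{(1)}$, i.e.
$$
f(\tilde{\bf x}_j)=\sum_{v\in N(j)}\hat A_{jv}\,\mathrm{ReLU}\big({\bf g}_v W^{(0)}\big)W^{(1)},\qquad {\bf g}_v:=(\hat A X)_{v,:}.
$$
I would split the neighbour aggregation into at most $q=\max_i N(i)$ slots; the $s$-th slot carries an adjacency weight bounded by $M_s=\max_{i}|\hat A_{iv}|$, so summing the slot-wise bounds produces the factor $\sum_{s=1}^{q}M_s$, while a Cauchy--Schwarz/Khintchine estimate across the $q$ slots yields the $\sqrt q$. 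Inside each slot I peel the outer linear map via $\|W^{(1)}\|_F\le B_2$, remove the ReLU through its $1$-Lipschitzness (a further Talagrand/Maurer contraction, exploiting positive homogeneity so no additive term appears), then peel the inner map via $\|W^{(0)}\|_F\le B_1$; the bound $\|{\bf g}_v\|\le R$, guaranteed by $\|{\bf x}_i\|\le R$ and the observation preceding the theorem that stacking and the three aggregation schemes preserve the radius constraint, supplies the $R$, and the doubly-indexed sum over the $K$ coordinates contributes the factor $K$. Collecting the constants $2$ (symmetrization), $\sqrt2\,\alpha_\ell$ (loss contraction), $\sqrt q$ and $\sum_s M_s$ (aggregation), $K$ (classes), $B_1B_2R$ (weights and features), and $1/\sqrt m$ (the $m$ labelled samples) reproduces exactly the middle term $\frac{2\alpha_\ell\sqrt{2q}KB_1B_2R\sum_{s=1}^qM_s}{\sqrt m}$.

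The step I expect to be the main obstacle is this last peeling: the graph convolution couples the hidden representations $\mathrm{ReLU}({\bf g}_v W^{(0)})$ across overlapping neighbourhoods, so the supremum over $(W^{(0)},W^{(1)})$ must be taken jointly and the shared weights respected, rather than bounding each neighbour term as an independent network. Keeping the adjacency weights $\hat A_{jv}$ outside the ReLU contraction while still applying that contraction, and correctly distinguishing the $\sqrt q$ from the linear $\sum_s M_s$ dependence, is the delicate part; I would also take care with the $N$-versus-$m$ bookkeeping, since the confidence term is measured over the node set $\Omega$ while the Rademacher average is normalised by the $m$ labelled samples.
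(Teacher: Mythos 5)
There is a genuine gap, and it is a fundamental one: your proposal does not prove the statement in question at all. The statement is Lemma~\ref{vector_contr} itself --- Maurer's vector-contraction inequality for Rademacher complexities --- but your argument \emph{invokes} Lemma~\ref{vector_contr} in its second step (``I invoke Lemma \ref{vector_contr} with $h_i=\ell(y_i,\cdot)$ and $L=\alpha_\ell$'') as a black box. A proof that uses the very statement it is supposed to establish is circular; what you have actually sketched is a proof of the paper's Theorem~\ref{mainthm} (the generalization bound), and indeed your sketch tracks the paper's proof of that theorem quite faithfully: symmetrization via Lemma~\ref{rademacher_ineq}, loss contraction via Lemma~\ref{vector_contr}, peeling $W^{(1)}$ then ReLU then $W^{(0)}$, the neighborhood bound of Lemma~\ref{critial lemma} supplying $R\sqrt{q}$, and Cauchy--Schwarz over the $K$ coordinates. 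But none of that is responsive to the statement you were given.

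For the record, the paper itself offers no proof of Lemma~\ref{vector_contr}: it is quoted directly from the cited reference (Maurer, 2016), so the ``paper's own proof'' is an external citation. If you wanted to actually prove it, the argument is quite different from anything in your proposal: one proceeds by conditioning and peeling off one sample point at a time, showing that for each fixed $i$ the expectation over $\sigma_i$ of the supremum involving the scalar term $\sigma_i h_i f({\bf x}_i)$ can be dominated by $\sqrt{2}L$ times an expectation involving the $K$ independent variables $\sigma_{ik}$ acting on the coordinates $f_k({\bf x}_i)$; the $\sqrt{2}$ arises from a two-point Khintchine-type comparison exploiting the Lipschitz bound $|h_i f({\bf x}_i)-h_i f'({\bf x}_i)|\leq L\|f({\bf x}_i)-f'({\bf x}_i)\|_2$, and an induction over $i=1,\dots,N$ completes the proof. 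None of these ingredients (conditioning, single-coordinate replacement, induction over the sample) appear in your proposal, so as a proof of the stated lemma it is missing the entire argument.
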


\begin{lemma}
	\cite{2001Rademacher}
	Consider a loss function $L: {\cal X}\times {\cal Y}\to {\mathbb R}^+$. Denote  ${\cal H}=\{L (y, f(\cdot)), f\in {\cal F}\}$, and let $({\bf x}_i, y_i)^N_{i=1}$ be independently selected according to the probability measure $\mathbb P$. Then for any $0<\delta<1$, with probability at least $1-\delta$,
	$${\cal E}(f)\leq {\cal E}_N(f)+2{ \cal {\widehat R}}({\cal H})+\sqrt{\frac{2\log(2/\delta)}{N}}, ~~\forall f\in {\cal H}.$$
	\label{rademacher_ineq}
\end{lemma}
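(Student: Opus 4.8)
The plan is to feed the two lemmas into each other. Lemma~\ref{rademacher_ineq} applied to the composed class $\mathcal H=\{\ell(y,f(\cdot)):f\in\mathcal F\}$ already delivers the decomposition $\mathcal E(f)\le\mathcal E_N(f)+2\widehat{\mathcal R}(\mathcal H)+\sqrt{2\log(2/\delta)/N}$, so the third summand of the claimed bound is immediate and the whole task reduces to showing
$$
\widehat{\mathcal R}(\mathcal H)\le\frac{\alpha_\ell\sqrt{2q}\,KB_1B_2R\sum_{s=1}^qM_s}{\sqrt m}.
$$
Everything afterwards is an estimate of this empirical Rademacher complexity of the two-layer GCN class over the $m$ labelled, aggregated samples.

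First I would remove the loss. Since $\ell(y,\cdot)$ is $\alpha_\ell$-Lipschitz as a map $\mathbb R^K\to\mathbb R$, I set $h_j(\cdot)=\ell(y_j,\cdot)$ and invoke the vector contraction inequality (Lemma~\ref{vector_contr}) with $L=\alpha_\ell$. This trades the single Rademacher sequence for a doubly indexed one $\sigma_{jk}$ at the cost of a factor $\sqrt2\,\alpha_\ell$, leaving me to bound
$$
\frac1m\,\mathbb E\sup_{f\in\mathcal F}\sum_{j=1}^m\sum_{k=1}^K\sigma_{jk}\,f_k(\tilde{\mathbf x}_j),
$$
where $f_k$ is the $k$-th output logit of the network \eqref{GCNexpre}.

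Next I would unfold the two graph convolutions in \eqref{GCNexpre}. Writing the $i$-th output row as $Z_i=\sum_{v\in N(i)}\hat A_{iv}\,\mathrm{ReLU}\big((\hat A XW^{(0)})_v\big)W^{(1)}$, I first push the supremum through the outer aggregation: each neighbour weight is controlled by $M_s=\max_i|\hat A_{iv}|$ and summing over the at most $q$ neighbours produces the factor $\sum_{s=1}^qM_s$. A second use of the contraction principle strips the $1$-Lipschitz $\mathrm{ReLU}$, and its positive homogeneity lets the hidden pre-activation $\hat A XW^{(0)}$ be treated linearly. I would then apply Cauchy--Schwarz against the $k$-th column of $W^{(1)}$ (the sum over the $K$ logits supplying the factor $K$) and against $W^{(0)}$, so that $\|W^{(1)}\|_F\le B_2$ and $\|W^{(0)}\|_F\le B_1$ enter, reducing the average to $\mathbb E_\sigma\|\sum_j\sigma_j\tilde{\mathbf x}_j\|_2$. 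By Jensen's inequality this is at most $\sqrt{\sum_j\|\tilde{\mathbf x}_j\|_2^2}\le R\sqrt m$ (using $\|\tilde{\mathbf x}_j\|_2\le R$), and dividing by $m$ gives the $R/\sqrt m$ scaling; the remaining $\sqrt q$ arises when the $q$-term neighbour sum is passed to an $\ell_2$ quantity via Cauchy--Schwarz. Collecting $\sqrt2\,\alpha_\ell$, $K$, $B_1B_2R$, $\sqrt q$ and $\sum_sM_s$ reproduces the middle term, and substituting back into Lemma~\ref{rademacher_ineq} closes the argument.

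The main obstacle I anticipate is the last paragraph: disentangling the two nested graph convolutions from the $\mathrm{ReLU}$. Unlike a plain two-layer network the aggregation couples the samples through shared hidden representations, so the two contraction steps (one for the loss, one for the nonlinearity) and the Cauchy--Schwarz peeling must be ordered so that the weight norms factor cleanly and the graph structure enters only through $q$ and $\sum_sM_s$, never through the full $N\times N$ matrix $\hat A$. Keeping the doubly indexed Rademacher bookkeeping consistent through these steps, and confirming that the homogeneity of $\mathrm{ReLU}$ legitimately confines the dependence to the Frobenius norms, is where the care will be needed.
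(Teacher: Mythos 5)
Your proposal does not prove the statement at hand; it is circular. The statement you were asked to prove \emph{is} Lemma~\ref{rademacher_ineq}: the generic Rademacher-complexity generalization bound (cited from the reference \cite{2001Rademacher}), which for an arbitrary loss-composed class $\mathcal H$ asserts $\mathcal E(f)\le\mathcal E_N(f)+2\widehat{\mathcal R}(\mathcal H)+\sqrt{2\log(2/\delta)/N}$ with probability at least $1-\delta$. Your very first step is ``Lemma~\ref{rademacher_ineq} applied to the composed class $\mathcal H$ already delivers the decomposition,'' i.e., you assume the conclusion you were supposed to establish. Everything that follows --- the vector contraction with $L=\alpha_\ell$, the peeling of $W^{(1)}$ and $W^{(0)}$ via Cauchy--Schwarz, the ReLU homogeneity argument, the neighbor sums controlled by $q$ and $\sum_s M_s$ --- is a sketch of the paper's Appendix proof of Theorem~\ref{mainthm}, which is a different statement: Theorem~\ref{mainthm} is exactly ``Lemma~\ref{rademacher_ineq} plus a bound on $\widehat{\mathcal R}({\cal F}_{B_1,B_2})$ for the specific GCN class.'' The GCN-specific quantities $q$, $K$, $B_1$, $B_2$, $R$, $M_s$, $m$ do not appear anywhere in the lemma you were to prove, which is a distribution-free statement about an abstract class $\mathcal H$.

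A genuine proof of Lemma~\ref{rademacher_ineq} makes no reference to GCNs at all (note also that the paper itself imports this lemma by citation and offers no proof). The standard route is: (i) consider $\Phi(S)=\sup_{h\in\mathcal H}\big(\mathbb E[h]-\frac1N\sum_{i=1}^N h({\bf x}_i,y_i)\big)$; (ii) verify the bounded-differences condition for $\Phi$ --- this is where a boundedness assumption on $\ell$ (say $0\le\ell\le 1$) is tacitly required, and it is what produces the $\sqrt{2\log(2/\delta)/N}$ deviation term via McDiarmid's inequality; (iii) bound $\mathbb E[\Phi(S)]$ by $2$ times the Rademacher complexity of $\mathcal H$ through symmetrization with a ghost sample; and (iv) since the bound is stated with the empirical complexity $\widehat{\mathcal R}(\mathcal H)$ rather than its expectation, apply a concentration step a second time, which is why the confidence parameter enters as $2/\delta$ rather than $1/\delta$. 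None of these ingredients --- McDiarmid, symmetrization, the boundedness hypothesis --- appears in your proposal, so the lemma itself remains unproved; what you have written is instead a (broadly faithful) outline of the proof of the paper's main theorem, which consumes this lemma as an input.
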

We first give a lemma which plays essential role in the proof of Theorem \ref{mainthm}.

\begin{lemma}
	Let $q=\max\{N(v)\}$ for each node $v\in \Omega$, then
	$$\max_{v} \|\sum_{j\in N(v)} {\hat A}_{vj}{\bf x}_j\|^2_2\leq  R^2 q.$$
	\label{critial lemma}
\end{lemma}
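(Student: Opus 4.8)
The plan is to control the vector sum by splitting it into two factors: a combinatorial factor governed by the number of neighbours $q$, and a normalization factor that the structure of $\hat{A}$ forces to be at most $1$. I would not try to track the individual entries $\hat{A}_{vj}$ (that is what the quantities $M_s$ in the theorem are for); instead the aim is to collapse their contribution into the constant $1$.

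First I would invoke the elementary vector inequality $\|\sum_{j} {\bf u}_j\|_2^2 \le n \sum_j \|{\bf u}_j\|_2^2$ for a sum of $n$ vectors (triangle inequality followed by Cauchy--Schwarz of the scalar sequence of norms against the all-ones sequence). Taking ${\bf u}_j = \hat{A}_{vj}{\bf x}_j$ and noting that the sum ranges over the $|N(v)| \le q$ neighbours of $v$, this yields
\begin{equation*}
\Big\|\sum_{j\in N(v)} \hat{A}_{vj}{\bf x}_j\Big\|_2^2 \le q \sum_{j\in N(v)} \hat{A}_{vj}^2 \|{\bf x}_j\|_2^2 \le qR^2 \sum_{j\in N(v)} \hat{A}_{vj}^2 ,
\end{equation*}
where the last inequality uses the hypothesis $\|{\bf x}_j\|_2 \le R$.

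The key remaining step, which I expect to be the main obstacle, is to show that the squared row mass obeys $\sum_{j\in N(v)} \hat{A}_{vj}^2 \le 1$. Here I would exploit the precise symmetric normalization $\hat{A}=\tilde{D}^{-1/2}\tilde{A}\tilde{D}^{-1/2}$, writing $\hat{A}_{vj} = \tilde{A}_{vj}/\sqrt{\tilde{D}_{vv}\tilde{D}_{jj}}$. Since $\tilde{A}=A+I$ has entries in $\{0,1\}$ and the added self-loops guarantee every degree $\tilde{D}_{jj}\ge 1$, one obtains
\begin{equation*}
\sum_{j\in N(v)} \hat{A}_{vj}^2 = \frac{1}{\tilde{D}_{vv}}\sum_{j} \frac{\tilde{A}_{vj}}{\tilde{D}_{jj}} \le \frac{1}{\tilde{D}_{vv}}\sum_{j} \tilde{A}_{vj} = 1 .
\end{equation*}
I would flag that the tempting shortcut of bounding the ordinary row sum $\sum_j \hat{A}_{vj}$ by $1$ is \emph{not} available: unlike the random-walk normalization $\tilde{D}^{-1}\tilde{A}$, the symmetric normalization is not row-stochastic, and one can build small examples where a row sum exceeds $1$. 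It is precisely the $\ell_2$ (squared) version above that survives, and it is exactly what the first display calls for.

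Combining the two displays gives $\|\sum_{j\in N(v)} \hat{A}_{vj}{\bf x}_j\|_2^2 \le qR^2 \cdot 1 = R^2 q$, which is the claim. As an alternative I would note the same bound follows more abstractly from $\sum_j \hat{A}_{vj}^2 = (\hat{A}^2)_{vv} \le \|\hat{A}\|_2^2 \le 1$, since the symmetric normalized adjacency has spectral norm at most $1$; the elementary degree argument is preferable, however, as it is self-contained and avoids invoking the spectrum of $\hat{A}$.
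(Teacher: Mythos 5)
Your proof is correct, and it takes a genuinely different route from the paper's. The paper rewrites the sum as a matrix product $\hat{A}_{v\cdot}\tilde{X}_v$, where $\tilde{X}_v$ stacks the neighbour features, and then bounds $\|\hat{A}_{v\cdot}\tilde{X}_v\|_2 \le \|\hat{A}_{v\cdot}\|_2\,\|\tilde{X}_v\|_2$: the factor $\sqrt{q}$ comes from the Frobenius-type bound $\|\tilde{X}_v\|_2 \le \sqrt{\sum_{s=1}^q\|{\bf x}_s\|_2^2}\le R\sqrt{q}$, and the factor $1$ comes from the asserted spectral bound $\|\hat{A}\|_2\le 1$. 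You instead extract the factor $q$ by Cauchy--Schwarz over the at most $q$ summands, and you replace the spectral fact by an explicit, self-contained computation showing the squared row mass $\sum_{j}\hat{A}_{vj}^2\le 1$ from the degree normalization. What your version buys is that it never invokes $\|\hat{A}\|_2\le 1$, which the paper states without proof; your remark that the symmetric normalization is not row-stochastic (so the naive bound $\sum_j\hat{A}_{vj}\le 1$ is unavailable) is a correct and relevant subtlety, and your closing observation that $\sum_j\hat{A}_{vj}^2=(\hat{A}^2)_{vv}\le\|\hat{A}\|_2^2$ is essentially the bridge back to the paper's argument. One minor caveat: your degree computation uses $\tilde{A}_{vj}^2=\tilde{A}_{vj}$, i.e.\ that the graph is unweighted; this matches the paper's setting, and for nonnegative weights the same conclusion follows from $\tilde{D}_{jj}\ge\tilde{A}_{jv}=\tilde{A}_{vj}$, so nothing essential is lost. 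Both arguments yield the identical bound $R^2q$.
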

\begin{proof}
~Denote ${\hat A}_v\in {\mathbb R}^{q_v\times q_v}$ as the sub-matrix of $\hat A$ whose row and column indices belong to the set $j\in N(v)$. Hence the size of ${\hat A}_v$ depends on the node $v$. Obviously, $q=\max{q_v}$ for $v\in \Omega$. Let ${\tilde X}_v =( {\bf x}^T_1, \cdots,{\bf x}^T_q)^T \in {\mathbb R}^{q\times d}$ be the feature matrix of the nodes in ${\cal G}_v$ (subgraph of $\cal G$). Hence,
$$
\max_{v}\Big\|\sum_{j\in N(v)} {\hat A}_{vj}{\bf x}_j\Big\|^2_2= \max_{v}\|{\hat A}_{v\cdot}{\tilde X}_v\|^2_2\leq \max_{v}\|{\hat A}_{v\cdot}\|^2_2 \|{\tilde X}_v\|^2_2,$$
where $\hat A_{v\cdot}$ is the $v-\!$th row of the matrix $\hat A$  with column index $j$ belong to the set $N(v)$. Notice that
$$\|{\tilde X}_v\|_2= \sup_{\|t\|_2=1} \|{\tilde X}_v t\|_2 \leq \sqrt{\sum^{q_v}_{s=1}\|{\bf x}_s\|^2_2}\leq R\sqrt{q_v}\leq R\sqrt{q},$$
and $\|\hat A\|_2\leq 1$. Therefore,
$$\max_{v}\Big\|\sum_{j\in N(v)} {\hat A}_{vj}x_j\Big\|^2_2\leq \|\hat A\|^2_2\|{\tilde X}_v\|^2_2\leq R^2 q.$$
\end{proof}

Now we are in position to give the proof of Theorem \ref{mainthm}.

\begin{proof} [Proof of Theorem \ref{mainthm}]
To allow a slight abuse of notations, we will use $X_j$ to denote $\tilde {X}_j$ due to the explanation on page 3. Denote $h(W^{(0)})= {\rm ReLU}$$({\hat A}XW^{(0)})$, $f(W^{(1)})$$ =$ ${\rm softmax}$ $({\hat h}(W^{(0)}) W^{(1)})$$=$ $ (f_1(W^{(1)}), \cdots, f_m(W^{(1)}))^T$ with ${\hat h}(W^{(0)})={\hat A} h(W^{(0)})$.
Applying Proposition 4 in \cite{Gao2017OnTP} to the case $\lambda=1$, we can attain the Lipschitz constant for standard softmax function is $M=1$.
Let $\hat A_{i\cdot}$ stands for the $i-$th row of the matrix $\hat A$, for  function set
\begin{align*}
&{\cal F}_{B_1, B_2}=\{f_i = {\rm softmax}({\hat A}_{i\cdot}{\rm ReLU} ({\hat A}X W^{(0)})W^{(1)}),i=1,\cdots,m, \\
&\|W^{(0)}\|_F\leq B_1, \|W^{(1)}\|_F\leq B_2\},
\end{align*}
the empirical Rademacher complexity is defined as
$${\cal {\hat R}} ({{\cal F}}_{B_1,B_2}) = {\mathbb E}_\sigma \Big[\frac1m\sup_{\stackrel{\|W^{(0)}\|_F\leq B_1}{\|W^{(1)}\|_F\leq B_2}} \sum^m_{i=1} \sigma_i f_i(W^{(1)})\Big],$$
where $\{\sigma_i\}^m_{i=1}$ is an i.i.d. family of Rademacher variables independent of ${\bf x}_i$. By the contraction property of Rademacher complexity,
$${\cal {\hat R}}({\cal H}) \leq \alpha_L{\cal {\hat R}}({{\cal F}}_{B_1, B_2}),$$
and notice Lemma \ref{rademacher_ineq}, we only need to bound ${\cal \hat R} ({\cal F}_{B_1,B_2})$.
Therefore, we have the following estimate by utilizing Lemma \ref{vector_contr}.
\begin{align*}
{\cal \hat R} ({\cal F}_{B_1,B_2}) &={\mathbb E}_\sigma \Big[\frac1m\sup_{\stackrel{\|W^{(0)}\|_F\leq B_1}{\|W^{(1)}\|_F\leq B_2}} \sum^m_{i=1} \sigma_i f_i(W^{(1)})\Big] \\
&\leq\frac{\sqrt{2}}{m}  {\mathbb E}_\sigma\Big[\sup_{\stackrel{\|W^{(0)}\|_F\leq B_1}{\|W^{(1)}\|_F\leq B_2}}\sum^m_{i=1} \sum^K_{k=1} \sigma_{ik}{\hat h}_i(W^{(0)}){\bf w}^{(1)}_k\Big],
\end{align*}
where $W^{(1)}= ({\bf w}^{(1)}_1,\cdots, {\bf w}^{(1)}_K)$, notice the property of inner product, the above estimate can be further bounded as
\begin{align*}
{\cal \hat R} ({\cal F}_{B_1,B_2})
&\leq \frac{\sqrt{2}}{m} {\mathbb E}_\sigma \Big[\sup_{\stackrel{\|W^{(0)}\|_F\leq B_1}{\|W^{(1)}\|_F\leq B_2}} \sum^K_{k=1} \max_{k\in [K]}\|{\bf w}^{(1)}_k\|_2 \big\| \sum^m_{i=1} \sigma_{ik} {\hat h}_i(W^{(0)})\big\|_2\Big],\\
&\leq \frac{\sqrt{2}}{m} {\mathbb E}_\sigma \Big[\sup_{\stackrel{\|W^{(0)}\|_F\leq B_1}{\|W^{(1)}\|_F\leq B_2}} \|W^{(1)}\|_F \sum^K_{k=1} \big\| \sum^m_{i=1} \sigma_{ik} {\hat h}_i(W^{(0)})\big\|_2\Big],\\
&\leq \frac{\sqrt{2}B_2}{m} {\mathbb E}_\sigma \Big[\sup_{\|W^{(0)}\|_F\leq B_1} \sum^K_{k=1} \big\| \sum^m_{i=1} \sigma_{ik} {\hat h}_i(W^{(0)})\big\|_2\Big],\\
&\leq \frac{\sqrt{2}B_2}{m} {\mathbb E}_\sigma \Big[\sum^K_{k=1} \sup_{\|W^{(0)}\|_F\leq B_1} \big\| \sum^m_{i=1} \sigma_{ik} {\hat h}_i(W^{(0)})\big\|_2\Big],
\end{align*} 
the last inequality follows by the property that $\sup(\sum_s a_s)\leq \sum_s \sup(a_s)$.  Now the key point is how to estimate the term $\sup_{\|W^{(0)}\|_F\leq B_1} \big\| \sum^m_{i=1} \sigma_{ik} {\hat h}_i(W^{(0)})\big\|_2$.
We will employ the idea introduced in \cite{shaogao2021} (in the proof of Theorem 1) to remove the ``sup'' term. Let
$h_v(W^{(0)})$ $=$ $\Big(h^1_v({\bf w}^{(0)}_1)$, $h^2_v({\bf w}^{(0)}_2)$, $\cdots$, $h^H_v({\bf w}^{(0)}_H)\Big)$ with  $W^{(0)}$ $=$ $({\bf w}^{(0)}_1$,$\cdots$, $w^{(0)}_H)$ and notice that
${\hat h}(W^{(0)})={\hat A} h(W^{(0)})$, then we have
\begin{align*}
\Big\|\sum^m_{i=1}\sigma_{ik}{\hat h}_i(W^{(0)})\Big\|^2_2
&= \sum^H_{t=1}\Big(\sum^m_{i=1}\sigma_{ik}\sum_{v\in N(i)}{\hat A}_{iv}h^t_v({\bf w}^{(0)}_t)\Big)^2_2\\
&=\sum^H_{t=1}\|{\bf w}^{(0)}_t\|^2_2 \Big(\sum^m_{i=1}\sigma_{ik}\sum_{v\in N(i)}{\hat A}_{iv}h^t_v({\bf w}^{0}_t/\|{\bf w}^{(0)}_t\|_2)\Big)^2.
\end{align*}
By the definition of Frobenius norm $\|W\|^2_F= \sum^H_{t=1} \|{\bf w}_t\|^2_2$, the supremum of the above quantity under the constraint $\|W\|_F\leq R$ must be obtained when $\|{\bf w}_{t_0}\|_2=B_1$ for some $t_0\in[H]$, and $\|{\bf w}_t\|_2=0$ for all $t\neq t_0$. Hence
\begin{align*}
\sup_{\|W^{(0)}\|_F\leq B_1}\Big\|\sum^m_{i=1}\sigma_{ik}{\hat h}_i(W^{(0)})\Big\|_2
&=\sup_{\|{\bf w}\|_2=B_1}\Big(\sum^m_{i=1}\sigma_{ik}\sum_{v\in N(i)}{\hat A}_{iv}h_v({\bf w})\Big).
\end{align*}
Let $n_s(j)$ be the $s-$th neighbor number of node $j$ ($s\in [q]$, $j\in [m]$). Recall $q:=\max |N(j)|, j\in[m]$, $M_s =\max_{i\in [m]} |\hat A_{iv}|$ with $v\in N(i)$, therefore
\begin{align*}
&{\mathbb E}_\sigma \Big[\sum^K_{k=1} \sup_{\|W^{(0)}\|_F\leq B_1} \big\| \sum^m_{i=1} \sigma_{ik} {\hat h}_i(W^{(0)})\big\|_2\Big]\\
&={\mathbb E}_\sigma \Big[\sum^K_{k=1}\sup_{\|{\bf w}\|_2=B_1}\Big(\sum^m_{i=1}\sigma_{ik}\sum_{v\in N(i)}{\hat A}_{iv}h_v({\bf w})\Big)\Big]\\
&\leq  {\mathbb E}_\sigma \Big[\sum^K_{k=1}\sup_{\|{\bf w}\|_2=B_1}\Big(\sum^q_{s=1} M_s \sum^m_{i=1} \sigma_{ik} h_{n_s(i)} ({\bf w})\Big)\Big].
\end{align*}

Applying the conclusion $\sup(\sum_s a_s)\leq \sum_s \sup(a_s)$ and contraction property of Rademacher complexity again, we have
\begin{align*}
&{\mathbb E}_{\sigma}\Big[\sum^K_{k=1} \sup_{\|{\bf w}\|_2=B_1} \Big(\sum^q_{s=1} M_s \sum^m_{i=1}\sigma_{ik}h_{n_s(i)}({\bf w})\Big)\Big]\\
&\leq {\mathbb E}_{\sigma}\Big[\sum^K_{k=1} \sum^q_{s=1} M_s\sup_{\|{\bf w}\|_2=B_1} \sum^m_{i=1}\sigma_{ik}h_{n_s(i)}({\bf w})\Big]\\
& \leq \sum^q_{s=1} M_s {\mathbb E}_{\sigma}\Big[\sum^K_{k=1} \sup_{\|{\bf w}\|_2=B_1}  \sum^m_{i=1}\sigma_{ik} \Big(\sum_{j\in N(n_s(i))}{\hat A}_{n_s(i)j}\langle {\bf x}_j, {\bf w}\rangle\Big)\Big]\\
& = \sum^q_{s=1} M_s {\mathbb E}_{\sigma}\Big[\sum^K_{k=1} \sup_{\|{\bf w}\|_2=B_1}   \Big\langle \sum^m_{i=1}\sigma_{ik}\sum_{j\in N(n_s(i))}{\hat A}_{n_s(i)j}{\bf x}_j, {\bf w}\Big\rangle\Big]\\
&\leq  B_1\sum^q_{s=1} M_s {\mathbb E}_{\sigma}\Big[\sum^K_{k=1}\Big \|\sum^m_{i=1}\sigma_{ik}\sum_{j\in N(n_s(i))}{\hat A}_{n_s(i)j}{\bf x}_j\Big\|_2\Big].\\
\end{align*}
Therefore, we only need to estimate the term
$$ {\mathbb E}_{\sigma}\Big[\sum^K_{k=1}\Big \|\sum^m_{i=1}\sigma_{ik}\sum_{j\in N(n_s(i))}{\hat A}_{n_s(i)j}{\bf x}_j\Big\|_2\Big].$$
Applying Cauchy-Schwartz inequality yields that
\begin{align*}
{\mathbb E}_{\sigma}\Big[\sum^K_{k=1}\Big \|\sum^m_{i=1}\sigma_{ik}\sum_{j\in N(n_s(i))}{\hat A}_{n_s(i)j}{\bf x}_j\Big\|_2\Big]
&\leq \sqrt{{\mathbb E}_{\sigma}\Big(\sum^K_{k=1}\Big \|\sum^m_{i=1}\sigma_{ik}\sum_{j\in N(n_s(i))}{\hat A}_{n_s(i)j}{\bf x}_j\Big\|_2\Big)^2}\\
&\leq  \sqrt{{\mathbb E}_{\sigma}K \sum^K_{k=1} \Big\|\sum^m_{i=1}\sigma_{ik}\sum_{j\in N(n_s(i))}{\hat A}_{n_s(i)j}{\bf x}_j\Big\|^2_2}\\
&\leq K\sqrt{\sum^m_{i=1}\Big\|\sum_{j\in N(n_s(i))}{\hat A}_{n_s(i)j}{\bf x}_j\Big\|_2^2},
\end{align*}
where the last inequality is due to the i.i.d condition of Rademacher sequences.  Plugging the conclusion of Lemma \ref{critial lemma} into the above term leads to
$${\mathbb E}_{\sigma}\Big[\sum^K_{k=1} \Big\|\sum^m_{i=1}\sigma_{ik}\sum_{j\in N(n_s(i))}{\hat A}_{n_s(i)j}{\bf x}_j\Big\|_2\Big] \leq KR \sqrt{qm},$$
and
$${\cal {\hat R}} ({{\cal F}}_{B_1,B_2}) \leq \frac{\sqrt{2q} KB_1B_2 R\sum^q_{s=1}M_s}{\sqrt{m}}.$$
This completes the proof by combining with Lemma \ref{rademacher_ineq}.
\end{proof}

\section*{Acknowledgement}
The work described in this paper was supported partially by the National Natural Science Foundation of China (12271111, 11871277), Special Support Plan for High-Level Talents of Guangdong Province (2019TQ05X571), Guangdong Basic and Applied Basic Research Foundation (2022A1515011726). The authors would like to thank Prof. Hong Chen from Huazhong Agricultural University for useful discussions about the theoretical analysis, which have helped to improve the presentation of the paper.
\bibliographystyle{abbrv}

\end{document}